\newtheorem{theorem}{Theorem}
\newtheorem{corollary}[theorem]{Corollary}
\newtheorem{lemma}[theorem]{Lemma}
\title{ Optimal Exact Matirx Completion Under new Parametrization}
\author{\href{mailto:<iramazan@alumni.cmu.edu>?Subject=Your UAI 2022 paper}{Ilqar Ramazali}{}}
\author{Barnabas Poczos}
\affil{%
    Carnegie Mellon University \\
    
    Pittsburgh, Pennsylvania, USA
}
\begin{document}
\maketitle

\begin{abstract}
We study the problem of exact completion for $m \times n$ sized matrix of rank $r$ with the adaptive sampling method.
 We introduce a relation of the exact completion problem with the sparsest vector of column and row spaces (which we call \textit{sparsity-number} here). 
Using this relation, we propose matrix completion algorithms that exactly recovers the target matrix. 
These algorithms are superior to previous works in two important ways.
First, our algorithms exactly recovers $\mu_0$-coherent column space matrices by probability at  least $1 - \epsilon$ using much smaller observations complexity than - $\mathcal{O}(\mu_0 rn \mathrm{log}\frac{r}{\epsilon})$—the state of art.
Specifically, many of the previous adaptive sampling methods require to observe the entire matrix when the column space is highly coherent.
However, we show that our method is still able to recover this type of matrices by observing a small fraction of entries under many scenarios.
Second, we propose an exact completion algorithm, which requires minimal pre-information as either row or column space is not being highly coherent. 
At the end of the paper, we provide experimental results that illustrate the strength of the algorithms proposed here. \\
\end{abstract}

\section{Introduction}\label{sec:intro}
In this paper, we illustrate how adaptivity helps us to reach theoretical lower bounds concerning observation count in the matrix completion problem.
In modern data analysis, it has been presented that in many scenarios, adaptive sensing and sampling can work more efficiently than passive methods \citep{haupt,kzmn}.
Our main objective is to further optimize adaptive sampling by minimizing the number of observations needed to recover the target matrix entirely.
We show how to recover the entire low-rank matrix by observing information-theoretically least number of entries in various settings. \medskip \\
Low-rank matrix completion plays a significant role in many real-world applications, including camera motion inferring, multi-class learning, positioning of sensors, and gene expression analysis \citep{ nina, akshay1}.
In gene expression analysis, the target matrix represents expression levels for various genes across several conditions.
Measuring gene expression, however, is expensive, and we would like to estimate the target matrix with a few observations as possible. In this paper, we provide an algorithm that can be used for matrix completion from limited data. 
Roughly speaking, to find each unknown expression level, we are supposed to do multiple measurements.
Each of the additional measurements has its extra cost. 
Naturally, we aim to solve the entire problem using the least possible measurement cost. \medskip \\
Krishnamurthy and Singh (\citeyear{akshay1}, \citeyear{akshay2}) illustrated how adaptive sampling reduces observation complexity compared to passive sampling.
These results were two of the earliest algorithms that were robust against coherent row space.
Like many other results in the literature, these algorithms also heavily rely on the incoherence of the column space. 
At first, authors showed for an $n\times n$ sized, rank $r$ matrix, with column space coherence of $\mu_0$, can be exactly recovered using just $\mathcal{O}(n\mu_0 r^{1.5} \log{r})$ observations (\citeyear{akshay1}), then this result optimized to  $\mathcal{O}(n\mu_0 r\log^2{r})$ in the later work (\citeyear{akshay2}).
Recently, \cite{nina} further improved previous results by proposing algorithm that performs $\mathcal{O}(n\mu_0 r \log{r})$ observations to accomplish the task.\medskip \\
The main goal of this paper to give a new approach to the exact recovery problem using the sparsest vector of column and row spaces instead of coherence.
Finding sparsest vector has been in the focus of the research attention for a long time  (\cite{sparse2}, \cite{sparse1}, \cite{sparse3}). 
However, to the best of our knowledge, it is the first time applied to active matrix completion in this paper. \medskip \\
In this particular work, we approach the exact completion problem in various given pre-information settings. 
Our first algorithm requires the precise value of the rank and no other information. Our second algorithm does not request any information except knowing that either column or row space is not highly coherent.
For comparison, as we discuss in the next sections, previous adaptive sampling recovery algorithms require the value of $n r\mu_0  \mathrm{polylog}{\frac{r}{\epsilon}}$, which implicitly requests estimation or exact value of $r$ and $\mu_0$.

\paragraph{Our Contributions.} In light of the above discussion, we state the  main contributions of this work.
\begin{itemize}
\item Relation of the sparsest vector of the column and row space and the problem of exact recovery has been studied in detail. 
An exact completion algorithm is proposed with respect to these vectors.
Moreover, using the relation of the sparsest vector to coherence number, we show that the proposed method exactly recovers the underlying low-rank matrix using less observation than the state of the art.
\item We provide efficient algorithms that require minimal information as $\mathbf{ERR}$ (rank), $\mathbf{ERRE}$ (either row or column space is not coherent).
Moreover, we show the observation complexity of these algorithms is upper bounded by $\mathcal{O}(nr \mu_0 \log{\frac{r}{\epsilon}})$. (the expression for observation complexity
is provided in the next sections after all the necessary definitions are given).
\item  To the best of our knowledge, all previous adaptive sampling methods need to observe entire matrix if the underlying matrix has a highly coherent column space. 
In the algorithm $\mathbf{ERRE}$  we show that having incoherent row space can be a backup and we can still recover these matrices using a small fraction of entries even it has highly coherent column space.

\end{itemize}

\paragraph{Related Work.} 
The power of adaptive sampling had been illustrated even earlier than \cite{akshay1}.
\citep{haupt, ramazanli2022performance, malloy, balak, castro,  ramazanli2020adaptive} showed that under certain hypothesis, adaptive sampling outperforms all passive schemes. \medskip \\
Exact recovery and matrix completion has been studied extensively under passive schemes as well. 
Nuclear norm minimization is one of the most popular methods \cite{gross}.
\cite{recht1} and \cite{recht2} showed that $\Omega ((m + n)r \mathrm{max} (\mu_0^2, \mu_1^2)\log^2{n_2})$ observations are enough to recover an $m \times n$ matrix of rank $r$ using nuclear norm minimization, where $\mu_0$ and $\mu_1$ correspond to column and row space coherence parameters.
Using the same technique, \cite{tao} showed that under the uniform sampling setting we need at least $\Omega(m r \mu_0 \log{n})$ observations to recover the matrix exactly. This result implies the near optimality of
nuclear norm minimization. Another work using nuclear norm minimization is due to \cite{chen}, in which they show how to recover coherent $n\times n$ sized matrix of rank $r$ using just $\mathcal{O}(nr \log^2 r)$ observations.\medskip \\
\cite{tsybk} showed how to use a nuclear norm minimization approach to approximate noisy
low-rank matrices under some global information conditions. 
This work assumes that $\mu_0$ (coherence of column space) is below a given threshold. This result has a similar flavor to ours in that it works even when there is less initial knowledge about the target matrix.
Later, this result was extended to a point where without any assumption on $\mu_0$ the target matrix could be approximated. 
However, the reconstruction error of approximation becomes worse in this case \cite{mai}. There are other approaches for noisy matrix completion which they mainly focus on parameters that describe how much information an observation reveals \cite{kesha,negah}.

\section{Preliminaries}

\paragraph{Notations.}
Let $\mathbf{M}$ denote the underlying $m \times n$ sized rank-$r$  matrix that we target to recover.  
For any positive integer $n$, let $[n]$ represent the set $\{1,2,...,n\}$. 
For any vector $x=(x_1,x_2,...,x_n)$ of size $n$, $\|x\|_{p}$ will denote the $L_{p}$ norm of it.
We call $x_i$ the $i$'th coordinate of $x$. 
For any, $\Omega \subset [n]$ let $x_{\Omega}$ denote the induced subvector of $x$ from coordinates $\Omega$.
For instance, for the vector $x=(1,2,4,8,9)$ and $\Omega = \{1,3\}$, $x_{\Omega}$ represents the vector $(1,4)$. 
For any $\mathbf{R}\subset[m]$, $\mathbf{M}_{\mathbf{R}:}$ stands for an $|\mathbf{R}| \times n$ sized submatrix of $\mathbf{M}$ that rows are restricted by $\mathbf{R}$. 
We define $\mathbf{M}_{:\mathbf{C}}$ in a similar way for restriction with respect to columns.
Intuitively, $\mathbf{M}_{\mathbf{R}:\mathbf{C}}$ defined for $|\mathbf{R}|\times |\mathbf{C}|$ sized submatrix of $\mathbf{M}$ with rows restricted to $\mathbf{R}$ and columns restriced to $\mathbf{C}$.
Moreover, for the special case $\mathbf{M}_{i:}$ stands for $i$-th row and $\mathbf{M}_{:j}$ stands for the $j$'th column.
Similarly, $\mathbf{M}_{i:\mathbf{C}}$ will represent the restriction of the row $i$ by $C$ and $\mathbf{M}_{\mathbf{R}:j}$ represents restriction of the column $j$ by $\mathbf{R}$.

\subsection{Problem Setup} 

\paragraph{Coherence.} One of the critical factors in the matrix completion problem is due to the coherence parameter of the target matrix \citep{zhng1, cnds1}. 
We define the coherence of an $r$-dimensional subspace $\mathbb{U}$ of $\mathbb{R}^n$ in the following way: 
\vspace*{-1mm}
\begin{align*}
\mu(\mathbb{U}) = \frac{n}{r} \underset{1 \leq j \leq n}{\max} || \mathcal{P}_{\mathbb{U}} e_j ||^2,
\vspace*{-3mm}
\end{align*}
where $e_j$ denotes the j-th standard basis element and $\mathcal{P}_{\mathbb{U}}$ represents the orthogonal projection operator onto the subspace $\mathbb{U}$. 
It is easy to see that if $e_j \in \mathbb{U} $ for some $j\in [n]$, then the coherence will attain its maximum value: $\mu(\mathbb{U}) = \frac{n}{r}$. 
We can see that if $\mathbb{U}$ is equally distant from each standard basis vectors, then $\mu(\mathbb{U})$ will be close to $1$, and additionally, it is lower bounded by $1$.
\paragraph{Sparsest vector /  sparsity number.} 
We unify $L_0$ norm of a vector, $L_0$ semi-norm of a matrix, $L_0$ semi-norm of the basis matrix of a subspace under the
name of \textit{nonsparsity-number} for ease of readability.
Furthermore, we call completion of these norms as \textit{sparsity-number}. 
We can see precise definitions below :  \medskip \\
\textit{Nonsparsity-number} of a vector $x\in \mathbb{R}^m$ represented by $\psi(x)$ and defined as $\psi(x) = \|x \|_0$.
\textit{Nonsparsity-number} of an $m\times n$ sized matrix $\mathbf{M}$ and a subspace $\mathbb{U}\subseteq \mathbb{R}^m$ also represented by $\psi$ and defined as:
\begin{align*}
    \psi(\mathbf{M})&=\mathrm{min}\{\psi(x) | x=\mathbf{M}z \text{ and } z\notin \mathrm{null}(\mathbf{M})  \} \\
    \psi(\mathbb{U})&=\mathrm{min}\{\psi(x) | x\in \mathbb{U} \text{ and } x\neq 0 \}
\end{align*}
Similarly, \hypertarget{sparsitynumber}{\textit{sparsity-number}} of vectors, matrices and subspaces all are represented by $\overline{\psi}$ and they are just completion of \textit{nonsparsity-nunmber}.
\begin{align*}
\overline{\psi}(x) &= m-\psi(x) \\
\overline{\psi}(\mathbf{M}) &=  m- \psi(\mathbf{M}) \\
\overline{\psi}(\mathbb{U}) &=  m- \psi(\mathbb{U}) 
\end{align*}  
The space \textit{sparsity-number} for matrices provides a novel and easy way to analyze adaptive matrix completion algorithms.
In many adaptive matrix completion methods, a crucial step is to decide whether a column is (or is not) contained in a given subspace. 
Ideally, we would like to make this decision as soon as possible before observing the entire column vector.
In this paper, we show that \textit{sparsity-number} helps us to decide whether a partially observed vector can be contained in a given subspace or not. 
In lemma~\ref{lm1}, we provide an answer to this question simply.

\subsection{Matrix Completion with Adaptive Sampling} 
We want to present an algorithm due to \citet{akshay1} here before providing our main results in the next section.
Authors  proposed an adaptive algorithm that can recover $n\times n$ sized  rank-$r$ matrices using $\mathcal{O}(n\mu_0 r^{1.5}\log{r})$ observations, which was indeed better than known state of the art $\mathcal{O}(n\mu_0 r^2 \log^2{n})$ for passive algorithms (\citeyear{recht2}).
The algorithm studies column space by deciding whether the partially observed column is linearly independent with previously fully observed columns.
Authors show observing $\mathcal{O}(\mu_0 r^{1.5}\log{\frac{r}{\epsilon}})$ observations for each column is enough to make the decision for linear independence with probability $1-\epsilon$.
The algorithm below describes the details of the proposed algorithm. 
\begin{algorithm}
\caption*{  \hypertarget{ks2013}{\textbf{KS2013:}} Exact recovery  \citet{akshay1}.}
\textbf{Input:}   $d=\mathcal{O}(\mu_0 r^{1.5}\log{\frac{r}{\epsilon}})$\\
 \textbf{Initialize:}  $k=0 , \widehat{\mathbf{U}}^0 = \emptyset$ 
\begin{algorithmic}[1]
    \STATE Draw uniformly random entries $\Omega \subset [m]$ of size $d$   
    \FOR{$i$ from $1$ to $n$}        
    \STATE  \hspace{0.2in} \textbf{if} $\| \mathbf{M}_{\Omega:i}-{\mathcal{P}_\mathbf{\widehat{U}_{\Omega}^k}} \mathbf{M}_{\Omega:i}\| >0$ 
    \STATE \hspace{0.2in}  \hspace{0.2in} Fully observe $\mathbf{M}_{:i}$ 
    \STATE  \hspace{0.2in}  \hspace{0.2in}  $\widehat{\mathbf{U}}^{k+1} \leftarrow \widehat{\mathbf{U}}^{k} \cup \mathbf{M_{:i}} $, Orthogonalize $\widehat{\mathbf{U}}^{k+1}$,  $k=k+1$          
   \STATE   \hspace{0.2in}  \textbf{otherwise:} $\widehat{\mathbf{M}}_{:i} = \widehat{\mathbf{U}}^k {\widehat{\mathbf{U}}^{k^+}_{\Omega}}
 \widehat{\mathbf{M}}_{\Omega :i}$
\ENDFOR
\end{algorithmic}
\textbf{Output:}  $\widehat{\mathbf{M}}$
\end{algorithm}

Later, authors improved the observation complexity to $\mathcal{O}(n\mu_0 r\log^2{\frac{r}{\epsilon}})$ in a proceeding work (\citeyear{akshay2}).
Then, another improvement due to \citet{nina} further reduced this complexity to $\mathcal{O}(n\mu_0 r\log{\frac{r}{\epsilon}})$-current state of the art using similar setting and algorithm.

\section{Main Results}

In this section, we provide the theoretical results of this paper. 
First, we build the bridge between space \textit{sparsity-number} and the exact completion problem. Then, we show using this connection its possible to give efficient algorithms.

\subsection{Optimal Observation for Each Column}

Before proceeding to more advanced algorithms, we target to answer one fundamental question.
What is the specific number of entries in \hyperlink{ks2013}{$\mathbf{KS2013}$} to observe in a column that allows us to deterministically decide whether it is independent or dependent on previous columns? 
In what follows, we prove \textit{sparsity-number} of the column space answers this question :
\begin{lemma} \label{lm1} Let $\mathbb{U}$ be a subspace of $\mathbb{R}^{m}$ and $x^1,x^2,...,x^n$ be any set of vectors from $\mathbb{U}$. Then the linear 
dependence of $x^1_\Omega, x^2_\Omega,...,x^n_\Omega$ implies linear dependence of 
$x^1,x^2,...,x^n$, for any $\Omega \subset [m]$ such that $|\Omega| > \overline{\psi}(\mathbb{U})$.
\end{lemma}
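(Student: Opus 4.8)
The plan is to argue the contrapositive: I will show that if $x^1, x^2, \dots, x^n$ are linearly \emph{independent} in $\mathbb{U}$, then for every $\Omega \subset [m]$ with $|\Omega| > \overline{\psi}(\mathbb{U})$ the restricted vectors $x^1_\Omega, \dots, x^n_\Omega$ remain linearly independent. Equivalently, I want to show that the restriction-to-$\Omega$ map, viewed as a linear map from $\mathrm{span}\{x^1,\dots,x^n\} \subseteq \mathbb{U}$ to $\mathbb{R}^{|\Omega|}$, is injective. Injectivity of this map is exactly the statement that no nonzero vector of $\mathbb{U}$ (in particular, in that span) can vanish on all of $\Omega$.

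First I would observe the key reformulation of $\overline{\psi}(\mathbb{U})$: by definition $\overline{\psi}(\mathbb{U}) = m - \psi(\mathbb{U})$, where $\psi(\mathbb{U}) = \min\{\|x\|_0 : x \in \mathbb{U}, x \neq 0\}$. So $\overline{\psi}(\mathbb{U})$ is the maximum number of coordinates on which some nonzero vector of $\mathbb{U}$ can be simultaneously zero; concretely, any nonzero $x \in \mathbb{U}$ has at most $\overline{\psi}(\mathbb{U})$ zero coordinates, i.e. its zero set $Z(x) = \{i : x_i = 0\}$ satisfies $|Z(x)| \le \overline{\psi}(\mathbb{U})$.

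Next, suppose toward a contradiction that $x^1_\Omega, \dots, x^n_\Omega$ are linearly dependent while $x^1, \dots, x^n$ are independent. Then there are scalars $c_1, \dots, c_n$, not all zero, with $\sum_i c_i x^i_\Omega = 0$. Set $y = \sum_i c_i x^i \in \mathbb{U}$. By linear independence of the $x^i$, $y \neq 0$. But $y_\Omega = \sum_i c_i x^i_\Omega = 0$, so $\Omega \subseteq Z(y)$, hence $|\Omega| \le |Z(y)| \le \overline{\psi}(\mathbb{U})$, contradicting the hypothesis $|\Omega| > \overline{\psi}(\mathbb{U})$. This completes the argument.

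I do not anticipate a serious obstacle here; the only subtlety is making sure the definitions are unwound correctly — in particular that $\overline{\psi}(\mathbb{U})$ really is the worst-case count of simultaneous zeros of a nonzero subspace element, and that the dependent combination on $\Omega$ lifts to a genuinely nonzero element of $\mathbb{U}$ (which is where the independence of the full vectors is used). Everything else is a one-line pigeonhole-style contradiction.
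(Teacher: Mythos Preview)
Your proposal is correct and is essentially the same argument as the paper's: both form the combination $y=\sum_i c_i x^i\in\mathbb{U}$, observe $y_\Omega=0$, and use that a nonzero element of $\mathbb{U}$ can vanish on at most $\overline{\psi}(\mathbb{U})$ coordinates to derive a contradiction. The only cosmetic difference is that you phrase it as the contrapositive (independence preserved under restriction) whereas the paper argues directly that $y$ must be the zero vector.
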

Merging the idea of the algorithm \hyperlink{ks2013}{$\mathbf{KS2013}$} with lemma 1 we get an algorithm that is presented and discussed in the Supplementary Materials, whose can recover low-rank matrices deterministically ($\mathbf{ERCS}$).
Shortly, setting $d = \psi(\mathbf{M})+1$ is enough to ensure the underlying matrix will be recovered with probability $1$.
Moreover, we analyze the following properties of the algorithm :
\begin{itemize}
\item  Lemma 1 is tight. Setting $d=\psi(\mathbb{U})$ would fail statement
of the lemma for some matrices.
\item  If $\overline{\psi}(\mathbb{U})+1 = r$ satisfied, then the observation complexity of the algorithm is equal to $(n+m-r)r$ which is degree of freedom of rank-$r$ matrices.
\end{itemize}
Moreover, we suggest an exact completion problem where each entry has heterogeneous observation cost and proposed an efficient solution to that given that \textit{sparsity number} of row space is known.\\[1.5ex]
Due to the challenge of finding space \textit{sparsity-number} of the column/row space of the underlying matrix, we didn't provide the algorithm and analysis in the main text.
However, having any lower bound to the \textit{sparsity-numbers} would also be enough successful termination of the algorithm.

\subsection{Exact Completion Problem}

Here, we provide an algorithm that exactly recovers a target matrix under the active setting.
As we discussed before, one of the strengths of the results of  \citet{akshay1,akshay2, nina} is that the algorithm is its robustness to highly coherent row space compared to previous results as \citet{recht2}.\\[1.5ex]
However, we notice that these algorithms are independent of row space and treat any row space  equally.
This phenomenon arises a natural question, whether there is an algorithm which enjoys properties of row space to optimize these algorithms further.\\[1.5ex]
For example, we can see the following matrices having the same rank $r=2$ and column space coherence $\mu_0=2$. The only difference is due to the coherence of row space, which is $3$ for $\mathbf{A}$ and near to $1$ for matrix $\mathbf{B}$.
Similarly, row space \textit{sparsity-number} is $1$ for the matrix $\textbf{A}$ and $4$ for the matrix $\textbf{B}$.\\
\vspace{-4mm}
\begin{align*}
\mathbf{A} = \begin{bmatrix}
    1   & 2 & 2 & 2 & 2 & 2\\
    0   & 2 & 2 & 2 & 2 & 2\\
    0   & 2 & 2 & 2 & 2 & 2\\
    0   & 2 & 2 & 2 & 2 & 2\\
\end{bmatrix} 
\end{align*}

\vspace{-5mm}

\begin{align*}
 \mathbf{B} = \begin{bmatrix}
    1   & 0 & 1 & 2 & 3 & 4 \\
    0   & 1 & 2 & 3 & 4 & 5 \\
    0   & 1 & 2 & 3 & 4 & 5 \\
    0   & 1 & 2 & 3 & 4 & 5  \\
\end{bmatrix} \\
\end{align*}

As previous methods are mainly based on the value of the size of the matrix, $r$, and $\mu_0$, these methods treat both these matrices equally.
Indeed the first column of the matrix $\mathbf{A}$ is crucial to study the column space. 
That's why we don't want to take the risk of missing the necessary information on this column.\\[1.5ex]
Therefore we end up observing many entries in each column to make sure we will not miss this column.
However, it is entirely different for the matrix $\mathbf{B}$; any missed column can be replaced by any other one in the study of the column space. 
That's why it should give us the flexibility of observing less number of entries in each column.

\subsubsection{\textbf{\textit{Exact completion for rank r matrices}}}

In the following algorithm, we propose a method that exactly recovers the  $m\times n$ sized rank $r$ underlying matrix, using just exact information of rank $r$.
The idea of the algorithm is to find $r$-many linearly independent rows and columns and recover the remaining entries based on them.
Finding the linearly independent columns is rely on simple linear algebra fact that if columns of $\mathbf{M}_{R:C}$ are linearly independent, then so are columns of $\mathbf{M}_{:C}$.
The statement is valid for rows as well symmetrically.\\[1.5ex]
Then, all we need to do is to wait for detecting $r$ many independent columns and rows.
Indeed the algorithm does not require an estimate or exact information of coherence as opposed to \hyperlink{ks2014}{$\mathbf{KS2013}$}, \citeyear{nina,akshay2} (coherence is crucial to compute $d$ in input phase).
The improvement for observation complexity can be observed in the following theorem and corollary.
\begin{algorithm}
\caption*{  \hypertarget{err}{\textbf{ERR:}}  Exact recovery for rank $r$ matrices.}

 \textbf{Input:}  Rank of the target matrix - $r$\\
 \textbf{Initialize:} $R = \emptyset$, $C=\emptyset$, $\widehat{r} = 0$
\begin{algorithmic}[1]\label{alg1}
 
    \WHILE{$\widehat{r} < r$}
    \FOR{$j$ from $1$ to $n$}
    \STATE Uniformly pick an unobserved entry $i$ from $\mathbf{M}_{:j}$  
    \STATE $\widehat{R}={R} \cup \{i\} , \widehat{C}= {C} \cup \{j\}$  
    
    \STATE \textbf{If} $ \mathbf{M}_{\widehat{R}:\widehat{C}}$ is nonsingular
    \STATE \hspace{0.1in} Fully observe $\mathbf{M}_{:j}$ and $\mathbf{M}_{i:} $ and set
     $R =\widehat{R}$ , $C = \widehat{C}$ , $\widehat{r}=\widehat{r}+1$
\ENDFOR
\ENDWHILE

\STATE Orthogonalize column vectors in $C$ and assign to $\widehat{\mathbf{U}}$

\FOR{each column $j \in [n]\setminus C$}
\STATE $\widehat{\mathbf{M}}_{:j} = \widehat{\mathbf{U}} {\widehat{\mathbf{U}}_{R:}}^+
 \widehat{\mathbf{M}}_{R: j}$
\ENDFOR
\end{algorithmic}
\textbf{Output:} $\widehat{\mathbf{M}}$
\end{algorithm}
\begin{theorem}\label{thm:lg2} 
Let $r$ be the rank of underlying $m\times n$ sized matrix $\mathbf{M}$ with column space $\mathbb{U}$ and row space $\mathbb{V}$. 
Then,  \hyperlink{err}{$\mathbf{ERR}$} exactly recovers the underlying matrix $\mathbf{M}$ with probability at least $1-\epsilon$  using number of observations at most: 
$$ (m+n-r)r + 
  \mathrm{min}\Big( 2 \frac{m n}{{\psi}(\mathbb{U})} \log{(\frac{r}{\epsilon})},  \frac{\frac{2m}{\psi(\mathbb{U})}(r+2 +\log{\frac{1}{\epsilon}})}{\psi(\mathbb{V})}n \Big)  $$
\end{theorem}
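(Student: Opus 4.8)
The plan is to split the observation count into two parts: the entries we commit to fully observing once a column/row is declared independent, and the entries we "probe" during the search phase (line~3 of \hyperlink{err}{$\mathbf{ERR}$}). The first part is deterministic: over the whole run we fully observe exactly $r$ columns and $r$ rows, and these overlap in an $r\times r$ block, so this contributes $nr + mr - r^2 = (m+n-r)r$ entries. This is the degrees-of-freedom term and needs no probabilistic argument. The remaining task is to bound the number of probe entries, and this is where the $\min$ of two expressions comes from: they are two different ways of accounting for the same search, and we simply take whichever is smaller.

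For the first bound inside the $\min$, I would argue as follows. Fix a stage where $\widehat r = k < r$ independent columns have been found, spanning a $k$-dimensional subspace. A column $j$ that is \emph{not} in the current span will, by Lemma~\ref{lm1} applied to the relevant spaces (or directly, since $\mathbb U$ has sparsity-number $\overline\psi(\mathbb U)$, equivalently nonsparsity-number $\psi(\mathbb U)$), be detected as independent by a single uniformly random probed entry with probability at least $\psi(\mathbb U)/m$ — because a nonzero vector in the column space has at least $\psi(\mathbb U)$ nonzero coordinates, so a random coordinate is nonzero (hence reveals independence against the current $R$-restricted basis, once $|\widehat R|$ exceeds the relevant sparsity threshold) with that probability. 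Here one has to be slightly careful: the cleanest statement is that the \emph{residual} of $\mathbf M_{:j}$ against the current column span lies in a subspace of $\mathbb U$ and is nonzero, so it too has $\ge \psi(\mathbb U)$ nonzero entries; I would want to check that $\psi$ of any nonzero subspace of $\mathbb U$ is $\ge \psi(\mathbb U)$, which is immediate from the definition of $\psi(\mathbb U)$ as a minimum over all nonzero vectors. Then across one full sweep of the $n$ columns, we make at most $n$ probes and expect to promote roughly $n\psi(\mathbb U)/m$ of them; after $O\!\big(\tfrac{m}{\psi(\mathbb U)}\log\tfrac r\epsilon\big)$ sweeps all $r$ promotions have happened except with probability $\epsilon$, by a standard coupon-collector / Chernoff bound on the number of independent columns still undetected. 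Multiplying sweeps by $n$ probes per sweep gives the $2\,\tfrac{mn}{\psi(\mathbb U)}\log\tfrac r\epsilon$ term. A symmetric accounting done one column at a time (each undetected column needs $O(m/\psi(\mathbb U))$ probes in expectation, and there are $n$ columns) gives the same order; the factor $2$ absorbs the Chernoff slack.

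For the second, sharper bound, the idea is to exploit the row space too. Once we have fully observed $\mathbf M_{i:}$ for the rows $i\in R$, the restricted columns $\mathbf M_{R:j}$ live in the $|R|$-dimensional row-restricted space, and independence of a new column only needs to be tested \emph{within} those already-chosen rows together with one fresh row coordinate — so the effective ambient dimension for the probing is governed by $\psi(\mathbb V)$ (how many coordinates of a generic row vector are nonzero) rather than by $m$ alone. Concretely, once $k$ rows are fixed, a column that is genuinely new is detected by a random probe with probability at least $\tfrac{\psi(\mathbb U)}{m}\cdot\tfrac{\psi(\mathbb V)}{\,?\,}$-type quantity; tracking this through the same Chernoff argument, with $r+2+\log\tfrac1\epsilon$ replacing $\log\tfrac r\epsilon$ because we union-bound over the $r$ promotions more carefully, yields $\tfrac{2m}{\psi(\mathbb U)}(r+2+\log\tfrac1\epsilon)\cdot\tfrac{n}{\psi(\mathbb V)}$. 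I expect the main obstacle to be exactly this coupling of the two spaces: making precise the claim that a partially observed column, restricted to the already-chosen rows $R$, is nonsingular-extendable with probability governed by $\psi(\mathbb V)$, and ensuring the row-search and column-search are not double-counted. Lemma~\ref{lm1} handles the deterministic "a large enough $\Omega$ certifies dependence" direction; the probabilistic converse — a random small $\Omega$ certifies \emph{in}dependence with the stated probability — is the piece that requires the sparsity-number lower bounds on both $\mathbb U$ and $\mathbb V$ and is where the real work lies. Finally, I would close by noting the success event (all $r$ columns and $r$ rows detected within the sweep budget) implies, via the linear-algebra fact stated before the algorithm, that $\widehat{\mathbf U}$ spans $\mathbb U$ and $\widehat{\mathbf U}_{R:}$ is invertible, so the reconstruction step on lines~10–12 returns $\mathbf M$ exactly, giving the overall $1-\epsilon$ guarantee.
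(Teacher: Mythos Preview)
Your treatment of the deterministic $(m+n-r)r$ term and of the first bound inside the $\min$ is essentially the paper's argument: the per-probe detection probability for an active column is at least $\psi(\mathbb U)/m$ (the paper proves the slightly sharper $\psi(\mathbb U)/(m-j)$ via exactly your ``residual lies in $\mathbb U$ and has $\ge \psi(\mathbb U)$ nonzeros'' observation), and then a geometric-tail plus union bound over $r$ target columns gives the $2\tfrac{mn}{\psi(\mathbb U)}\log\tfrac{r}{\epsilon}$ sweep cost.

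Where your proposal goes off the rails is the second bound. You treat $\psi(\mathbb V)$ as something that sharpens the \emph{per-probe} success probability---``detected by a random probe with probability at least $\tfrac{\psi(\mathbb U)}{m}\cdot\tfrac{\psi(\mathbb V)}{?}$''---via some row-restriction argument. That is not how $\psi(\mathbb V)$ enters, and I do not see a way to make it enter that way: the probe is a single row coordinate of a column vector in $\mathbb U\subset\mathbb R^m$, and the row space $\mathbb V\subset\mathbb R^n$ says nothing about which row coordinates of such a vector are nonzero. The paper's mechanism is orthogonal to yours: it proves (as a separate lemma) that whenever \emph{any} column is still active, at least $\psi(\mathbb V)$ columns are simultaneously active. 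The argument is a row-space application of Lemma~\ref{lm1}: if fewer than $\psi(\mathbb V)$ columns were outside the current span, restricting to the complementary $>n-\psi(\mathbb V)$ columns would drop the rank below $r$, forcing a dependence among any $r$ independent rows on that large coordinate set, which Lemma~\ref{lm1} (for $\mathbb V$) then lifts to a global row dependence---contradiction.

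Given that lemma, the second bound is an \emph{active-observation} count, not a per-probe probability boost. One first bounds the total number of probes landing in active columns needed to score $r$ successes: this is a negative-binomial tail with success probability $\psi(\mathbb U)/m$, and the paper shows $\tfrac{2m}{\psi(\mathbb U)}(r+2+\log\tfrac1\epsilon)$ such active probes suffice with probability $1-\epsilon$ (so the $r+2+\log\tfrac1\epsilon$ is the negative-binomial tail, not a refined union bound as you guessed). Since each full sweep contributes at least $\psi(\mathbb V)$ active probes, the number of sweeps is at most that quantity divided by $\psi(\mathbb V)$, and multiplying by $n$ probes per sweep gives the stated bound. Your ``coupling of the two spaces'' intuition is pointing at the right phenomenon, but the concrete lemma you need is ``$\ge\psi(\mathbb V)$ active columns at all times,'' and without it the second bound does not go through.
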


In the appendix section we are introducing the relation between $\psi(U)$ which using it we can imply the following conclusion is always right.

\begin{corollary} \label{cor:err} 
Observation complexity of the algorithm $\mathbf{ERR}$ is always guaranteed to be upper bounded by 
$$(m+n-r)r + \mathcal{O}\big(nr\mu_0\log{(\frac{r}{\epsilon})}\big)$$
and in many cases it is as low as $\mathcal{O}((m+n-r)r )$ which is absolute lower bound for any algorithm. 
\end{corollary}

\subsubsection{\textbf{\textit{Exact Recovery While Rank Estimation}}}

In this section, we solve the exact completion problem in a slightly different setup.
$\mathbf{ERR}$ assumes that we know the exact rank of the underlying matrix.
However, here we assume that we don't have this information. Therefore we don't know precisely at which point the process of searching a new independent row/column should stop.
In what follows, we show that if at a given state new independent column/row not detected for long enough time, then it means that it is likely no more one exists.
We formalize this statement in the following theorem.
\begin{algorithm}
\caption*{  \hypertarget{erre}{\textbf{ERRE:}}  Exact recovery while rank estimation.}

 \textbf{Input:}  $T$-delay parameter at the end of algorithm \\
 \textbf{Initialize:} $R = \emptyset, C=\emptyset,  \widehat{r}= 0,  delay = 0$

\begin{algorithmic}[1]\label{alg1}

    \WHILE{$delay <T $}
    \STATE   $ delay= delay + 1$

    \FOR{$j$ from $1$ to $n$}
    \STATE Uniformly pick an unobserved entry $i$ from $\mathbf{M}_{:j}$  
    \STATE $\widehat{R}={R} \cup \{i\} , \widehat{C}= {C} \cup \{j\}$  
    
    \STATE \textbf{If} $ \mathbf{M}_{\widehat{R}:\widehat{C}}$ is nonsingular :
    \STATE \hspace{0.1in} Fully observe $\mathbf{M}_{:j}$ and $\mathbf{M}_{i:} $ 
and set  $R =\widehat{R}$ , $C = \widehat{C}$ , $\widehat{r}=\widehat{r}+1, delay=0$
    \ENDFOR

\ENDWHILE

\STATE Orthogonalize column vectors in $C$ and assign to $\mathbf{U}$

\FOR{each column $j \in [n]\setminus C$ }
\STATE $\widehat{\mathbf{M}}_{:j} = \widehat{\mathbf{U}} {\widehat{\mathbf{U}}_{R:}}^+ \widehat{\mathbf{M}}_{R: j}$
\ENDFOR

\end{algorithmic}
\textbf{Output:} $\widehat{\mathbf{M}}, \widehat{r}$
\end{algorithm}
\vspace{-1mm}
\begin{theorem}  \label{thm:erre}
Let $r$ be the rank of underlying $m\times n$ sized matrix $\mathbf{M}$ with column space $\mathbb{U}$ and row space $\mathbb{V}$. 
Then, \hyperlink{erre}{$\mathbf{ERRE}$} exactly recovers the underlying matrix $\mathbf{M}$ while estimating rank with probability at least $1- ( \epsilon + e^{-T\frac{\psi(\mathbb{U})\psi(\mathbb{V})}{m}})$  using number of observations at most:
\begin{align*}
  &(m+n-r)r+Tn +  \\[0.8ex]
  \mathrm{min} \Big(  2 \frac{m n}{\psi(\mathbb{U})}&\log{(\frac{r}{\epsilon})} , \frac{\frac{2m}{\psi(\mathbb{U})}(r+2 +\log{\frac{1}{\epsilon}})}{\psi(\mathbb{V})}n ) \Big)
\end{align*}
\end{theorem}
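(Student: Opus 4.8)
The plan is to build on the analysis of $\mathbf{ERR}$ (Theorem~\ref{thm:lg2}) and add exactly one new ingredient: a bound on the probability that the delay counter reaches $T$ \emph{prematurely}, i.e.\ while the rank of the already-detected submatrix is still strictly below $r$. I would split the correctness event into two parts. First, conditioned on the algorithm eventually collecting $r$ independent rows and columns, the reconstruction step is identical to that of $\mathbf{ERR}$, so exact recovery follows from the same linear-algebra fact (columns of $\mathbf{M}_{R:C}$ independent $\Rightarrow$ columns of $\mathbf{M}_{:C}$ independent, and symmetrically for rows), together with the standard identity $\widehat{\mathbf{M}}_{:j} = \widehat{\mathbf{U}}\widehat{\mathbf{U}}_{R:}^{+}\widehat{\mathbf{M}}_{R:j}$ once $\widehat{\mathbf{U}}$ spans $\mathbb{U}$ and $\widehat{\mathbf{U}}_{R:}$ has full column rank. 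Second, I must control the bad event that the outer \texttt{while} loop terminates with $\widehat{r}<r$.

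For the premature-termination bound, fix any intermediate state in which $\widehat{r}=s<r$, with detected rows $R$ ($|R|=s$) and columns $C$ ($|C|=s$). Since $\mathrm{rank}(\mathbf{M})=r>s$, there is a column outside $C$ not in $\mathrm{span}(\mathbf{M}_{:C})$; combining this with the structure of the row space and Lemma~\ref{lm1} (applied to the column space $\mathbb{U}$ to argue that a partially observed column reveals independence once enough coordinates are seen), I would show that in a single pass of the inner \texttt{for} loop over $j=1,\dots,n$, the probability of \emph{failing} to detect a new independent pair is at most $1-\frac{\psi(\mathbb{U})\psi(\mathbb{V})}{mn}\cdot n = 1-\frac{\psi(\mathbb{U})\psi(\mathbb{V})}{m}$ — the factor $\psi(\mathbb{V})/n$ being a lower bound on the fraction of columns that are "useful" (carry new column-space information) and $\psi(\mathbb{U})/m$ a lower bound, via the sparsity-number, on the chance that the single randomly probed entry $i$ in such a column lands where it certifies a new independent row. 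Since $delay$ only resets when a detection happens, reaching $delay=T$ from a state with $\widehat r<r$ requires $T$ consecutive failed passes, which has probability at most $\bigl(1-\frac{\psi(\mathbb{U})\psi(\mathbb{V})}{m}\bigr)^{T}\le e^{-T\psi(\mathbb{U})\psi(\mathbb{V})/m}$. A union bound over the at most $r$ values of $s$ — or, more cleanly, the observation that the counter can only be reset finitely often, so one application of the geometric tail suffices — yields the additive $e^{-T\psi(\mathbb{U})\psi(\mathbb{V})/m}$ loss in the success probability. The remaining $\epsilon$ comes verbatim from the $\mathbf{ERR}$ analysis (the probability that the adaptive coordinate sampling within the detection phase fails to certify independence in time).

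For the observation count, I would add up three contributions exactly as in Theorem~\ref{thm:lg2} plus one extra term. The $r$ fully observed rows and columns cost $(m+n-r)r$ after accounting for double-counted entries; the detection probes during the successful search contribute the $\mathrm{min}(\cdot,\cdot)$ term, bounded identically to the $\mathbf{ERR}$ proof (one probe per column per pass, with the two bounds corresponding to the two ways of charging passes — directly against $\psi(\mathbb{U})$, or against the product $\psi(\mathbb{U})\psi(\mathbb{V})$ when the row space is also non-sparse); and the final $T$ "wasted" passes before termination each probe at most one entry in each of the $n$ columns, contributing $Tn$. Summing gives the stated bound. The main obstacle I anticipate is the per-pass detection probability: I need the sparsity-number quantities $\psi(\mathbb{U})$ and $\psi(\mathbb{V})$ to correctly quantify both "how many columns are informative" and "how likely a single probed coordinate certifies a new row," and making the independence/selection events line up — in particular handling the conditioning so that probes across the pass behave like independent-enough trials and invoking Lemma~\ref{lm1} with the right index set — is the delicate step; once that per-pass bound is in hand, the geometric tail and the counting are routine.
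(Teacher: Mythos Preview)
Your proposal is essentially the paper's own argument: reduce to the $\mathbf{ERR}$ analysis for the observation count and the $\epsilon$ part of the failure probability, and separately bound the probability that the delay counter reaches $T$ while $\widehat r<r$ by a geometric tail over $T$ passes, using that each pass has at least $\psi(\mathbb V)$ active columns (the paper's Lemma~\ref{lem:activecolumns}) and that each active-column probe succeeds with probability at least $\psi(\mathbb U)/m$ (Lemma~\ref{lemma:prob_bound}).

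One correction to your per-pass arithmetic: your bound ``failure in one pass $\le 1-\frac{\psi(\mathbb U)\psi(\mathbb V)}{m}$'' is not what the two lemmas give, and the multiply-by-$n$ step is not a valid probability computation. With at least $t=\psi(\mathbb V)$ active columns and per-column detection probability at least $k/m=\psi(\mathbb U)/m$, the per-pass failure probability is at most $(1-k/m)^{t}$, which by Bernoulli is \emph{larger} than $1-kt/m$, so your stated bound is too strong. The paper simply chains $T$ passes to get $(1-k/m)^{tT}\le e^{-Tkt/m}$, which is the figure in the theorem; use that route. Also, your ``union bound over $r$ values of $s$'' would cost an extra factor $r$; the clean version you mention second (condition on $\widehat r=i$, note each conditional failure is $\le e^{-Tkt/m}$, and sum against $\sum_i P(\widehat r=i)\le 1$) is exactly what the paper does and avoids that factor.
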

which implies the following corollary right, similar to previous case.
\begin{corollary}\label{cor:erre} Lets assume that either $\psi(\mathbb{U})$ or $\psi(\mathbb{V})$ is big enough i.e. $\psi(\mathbb{U})\psi(\mathbb{V}) \geq m$. {For} a given $\epsilon$  set $T=\log{\frac{1}{\epsilon}}$, then $\mathbf{ERRE}$ recovers underlying matrix with probability $1-2\epsilon$  using just 
\begin{align*}
  &(m+n-r)r+ n\log{\frac{1}{\epsilon}} +\\[0.8ex]
  \mathrm{min} \Big(  2 &\frac{m n}{\psi(\mathbb{U})}\log{(\frac{r}{\epsilon})} , \frac{\frac{2m}{\psi(\mathbb{U})}(r+2 +\log{\frac{1}{\epsilon}})}{\psi(\mathbb{V})}n  \Big)     
\end{align*}
\end{corollary}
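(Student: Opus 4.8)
The plan is to derive this directly from Theorem~\ref{thm:erre}, which already supplies both a probability guarantee and an observation bound as functions of the delay parameter $T$; the corollary is simply the specialization $T = \log(1/\epsilon)$. There are two independent things to check: the failure probability and the observation count.

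For the probability, Theorem~\ref{thm:erre} asserts that $\mathbf{ERRE}$ fails with probability at most $\epsilon + e^{-T\psi(\mathbb{U})\psi(\mathbb{V})/m}$. Under the hypothesis $\psi(\mathbb{U})\psi(\mathbb{V}) \geq m$ we have $\psi(\mathbb{U})\psi(\mathbb{V})/m \geq 1$, so $-T\psi(\mathbb{U})\psi(\mathbb{V})/m \leq -T$ and therefore $e^{-T\psi(\mathbb{U})\psi(\mathbb{V})/m} \leq e^{-T}$. Plugging in $T = \log(1/\epsilon)$ (natural logarithm, as is needed for the bound to make sense) gives $e^{-T} = e^{-\log(1/\epsilon)} = \epsilon$, so the total failure probability is at most $\epsilon + \epsilon = 2\epsilon$ and the success probability is at least $1-2\epsilon$.

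For the observation count, I substitute $T = \log(1/\epsilon)$ into the bound $(m+n-r)r + Tn + \min\!\big(2mn\log(r/\epsilon)/\psi(\mathbb{U}),\; \tfrac{2m}{\psi(\mathbb{U})}(r+2+\log(1/\epsilon))\,n/\psi(\mathbb{V})\big)$ of Theorem~\ref{thm:erre}. The term $Tn$ becomes $n\log(1/\epsilon)$ and every other term is unchanged, which is exactly the claimed expression.

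There is no genuinely hard step: all of the work — bounding the number of extra ``delay'' rounds by $T$, the deterministic linear-independence detection via Lemma~\ref{lm1}, and the union bound across the $r$ successful detections — is already inside the proof of Theorem~\ref{thm:erre}, which we take as given. The only subtleties, both dispatched above, are fixing the logarithm convention so that $e^{-\log(1/\epsilon)}$ collapses to $\epsilon$, and using the inequality $\psi(\mathbb{U})\psi(\mathbb{V}) \geq m$ in the direction that shrinks the exponential term rather than enlarging it.
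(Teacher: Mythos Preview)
Your proof is correct and follows essentially the same approach as the paper: apply Theorem~\ref{thm:erre} with $T=\log(1/\epsilon)$, use the hypothesis $\psi(\mathbb{U})\psi(\mathbb{V})\geq m$ to bound $e^{-T\psi(\mathbb{U})\psi(\mathbb{V})/m}\leq e^{-T}=\epsilon$, and read off the observation bound. The only cosmetic difference is that the paper phrases the exponential bound as $e^{-\frac{kt}{m}\log(1/\epsilon)}\leq(1/e)^{\log(1/\epsilon)}$, whereas you more directly note $-T\cdot kt/m\leq -T$; both are the same computation.
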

We refer to the analysis of corollary \ref{cor:erre} in the Supplementary Material to understand this expression.

\section{Experimental Results}

\textbf{Experiments for Synthetic Data:} We tested all of our exact completion algorithms-\hyperlink{err}{$\mathbf{ERR}$} and \hyperlink{erre}{$\mathbf{ERRE}$}
for synthetically generated low-rank matrices.\\[1.5ex]
To design an $m\times n$ sized rank $r$ with \textit{space sparsity number} equal to $1$, we generate $m\times {r-1}$ and ${r-1}\times n$ sized matrices $\mathbf{X,Y}$, where $X_{i,j},Y_{i,j}\sim \mathcal{N}(0,1)$. 
As we discussed before, multiplication of these matrices would gives us a rank $r-1$ matrix.
Moreover, as the column space of $\mathbf{M}$ is column space of $\mathbf{Y}$.  Given that  $Y_{i,j} \sim \mathcal{N}(0,1)$ implies that coherence of column space of $\mathbf{Y}$ is small. Therefore, $\mathbf{M}$ has small column space coherence number.
We can use the similar argument to claim the coherence of column space is also small as row space of $\mathbf{M}$ is the row space of $\mathbf{X}$. \medskip\\
In order to make column space of this matrix highly coherent, we follow the following procedure:

\begin{itemize}
    \item[1.]   generate a random vector $u\in \mathbb{R}^n$
    \item[2.]   randomly select a number $i \in [m]$ 
    \item[3.]   replace  $\mathbf{M}_{i:}$ with $u$
\end{itemize}
We can guarantee that the resulting matrix will contain $i$-th standard basis vector $e_i$ in the column space. 
To observe this phenomenon, lets analyse the restriction of the matrix $\mathbf{M}$ to the first $r$ columns and all the rows but row $i$.
As this is a submatrix of rank $r-1$ matrix (initial $\mathbf{M}$) this matrix also has rank at most $r-1$.
Therefore, we have non-trivial coefficients $\alpha_1, \alpha_2, \ldots, \alpha_r$, that makes linear combination of columns of submatrix to be equal to zero vector.
Therefore,
\begin{align*}
\alpha_1 \mathbf{M}_{:1}+\alpha_1 \mathbf{M}_{:2}+\ldots+ \alpha_r\mathbf{M}_{:r} = w e_i
\end{align*}
for some $w$. Then all we need to show is $w \neq 0$, however it is straightforward because $w = \alpha_1 u_1+\alpha_2 u_2 +\ldots + \alpha_r u_r$ which is nonzero because $u$ is random. 
In conclusion, $e_i$ is contained in the column space of $\mathbf{M}$ and therefore from the definition of coherence of the columns space of the matrix is equal to $\frac{m}{r}$ which is also maximum value.\medskip\\
We note that, we might change the process above and generate several random vectors and replace them with some rows of the matrix, we would still get high coherence values as $\Omega(\frac{m}{r})$.
Using similar idea we can add replace columns with random vector to get highly coherent row spaced matrices.
Moreover, we can apply both at the same time, to get highly coherent column and highly coherent row space.\\[1.5ex]
We test our methods for fixed $r=5$ and $r=10$ with $n$ varying over $\{1000,2000,\ldots, 12000\}$.
Similarly, we fixed $n=3000$ and varied $r$ over $\{1,2,\ldots, 12\}$.
Moreover, we classified matrices into four types for coherence/incoherence of the column and row space. 
From the table below, we can get the comparison of the performance of our algorithms to previous methods (\cite{nina,akshay1,akshay2,chen}). 

\newpage

\begin{table}[h]
\begin{center}
\begin{tabular}{cc} 
 \includegraphics[height=2.74cm,width=4.2cm]{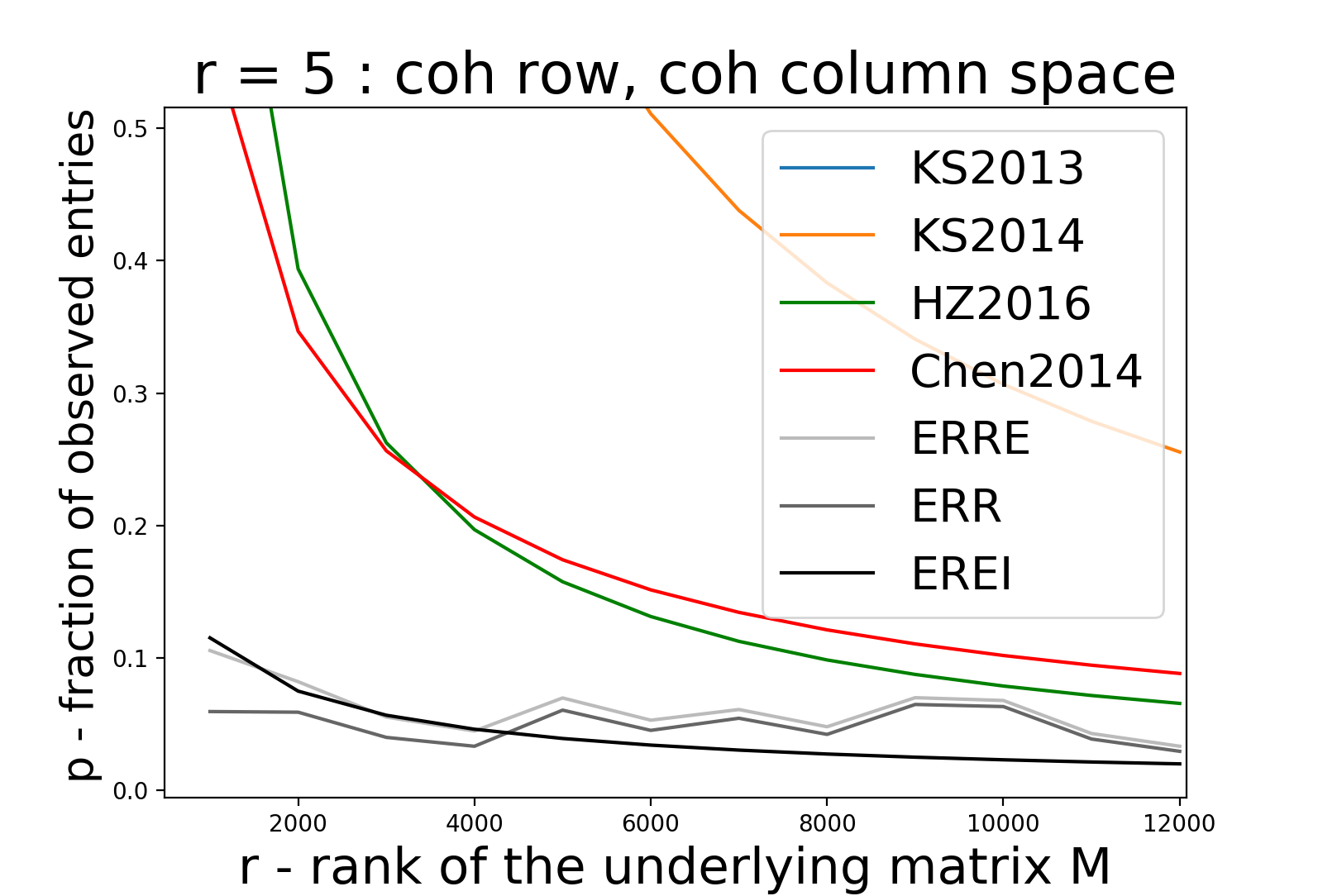} 
 \includegraphics[height=2.74cm,width=4.2cm]{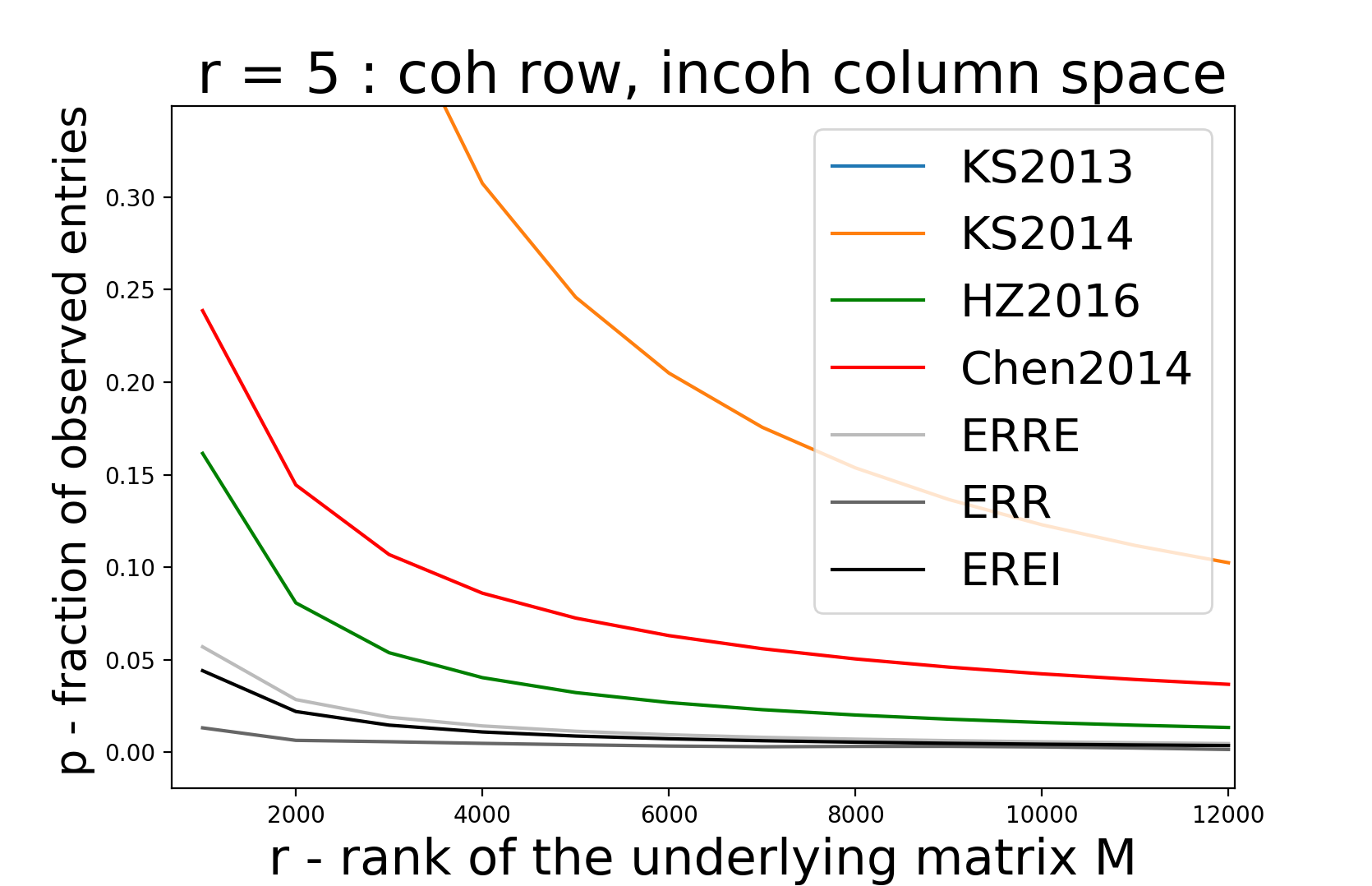} \\
 \includegraphics[height=2.74cm,width=4.2cm]{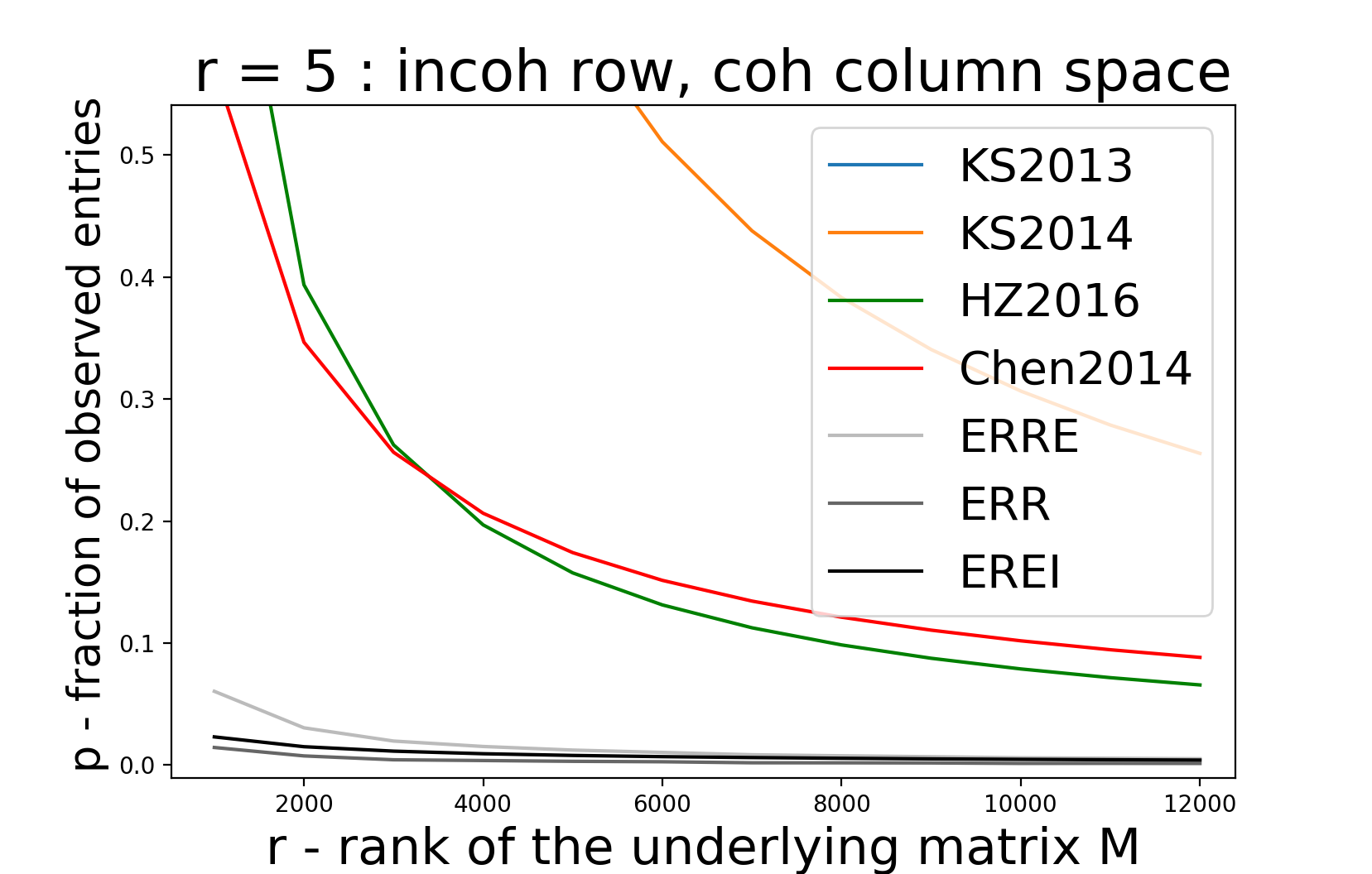} 
 \includegraphics[height=2.74cm,width=4.2cm]{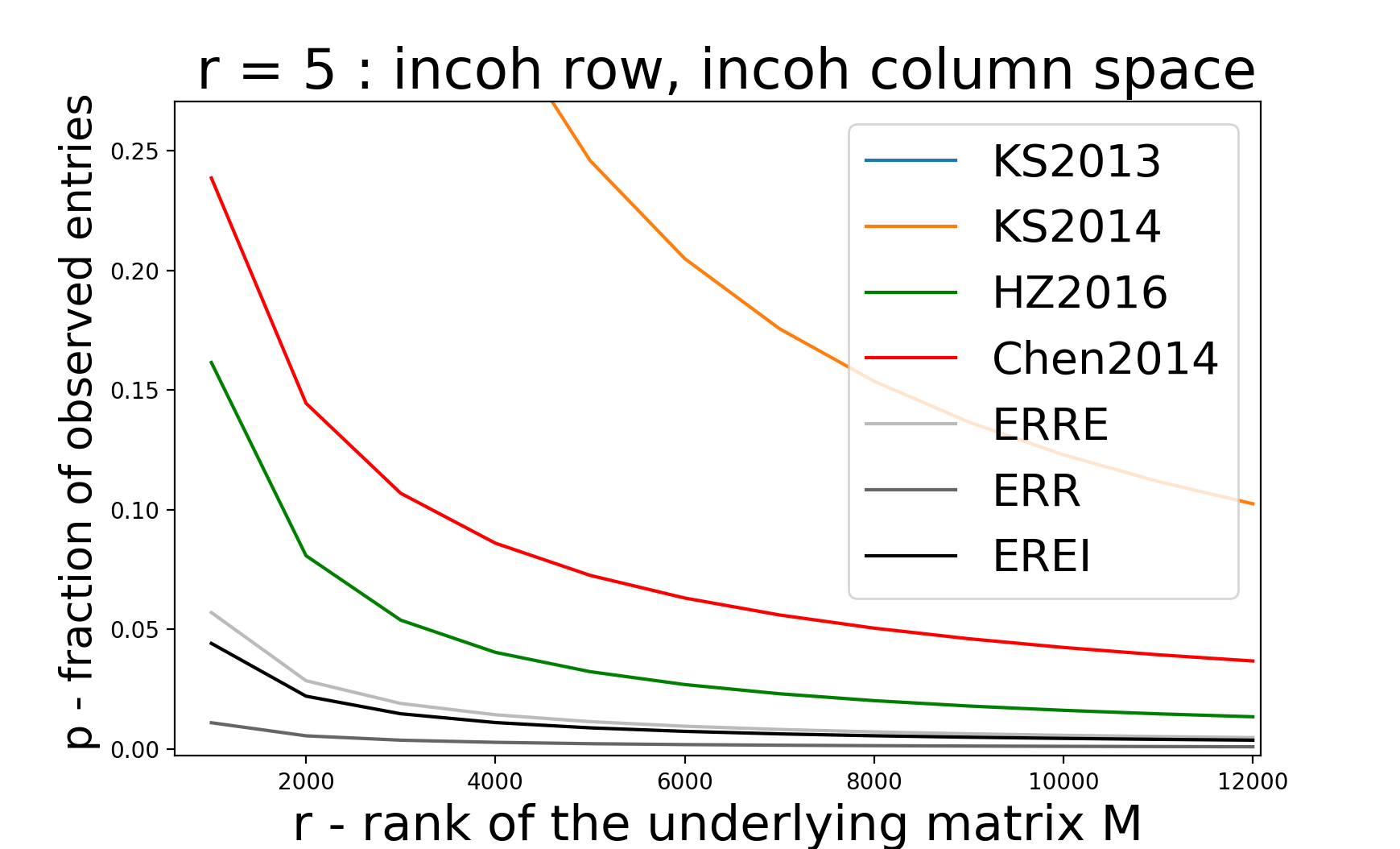}  \\[0.5ex]
 \includegraphics[height=2.74cm,width=4.2cm]{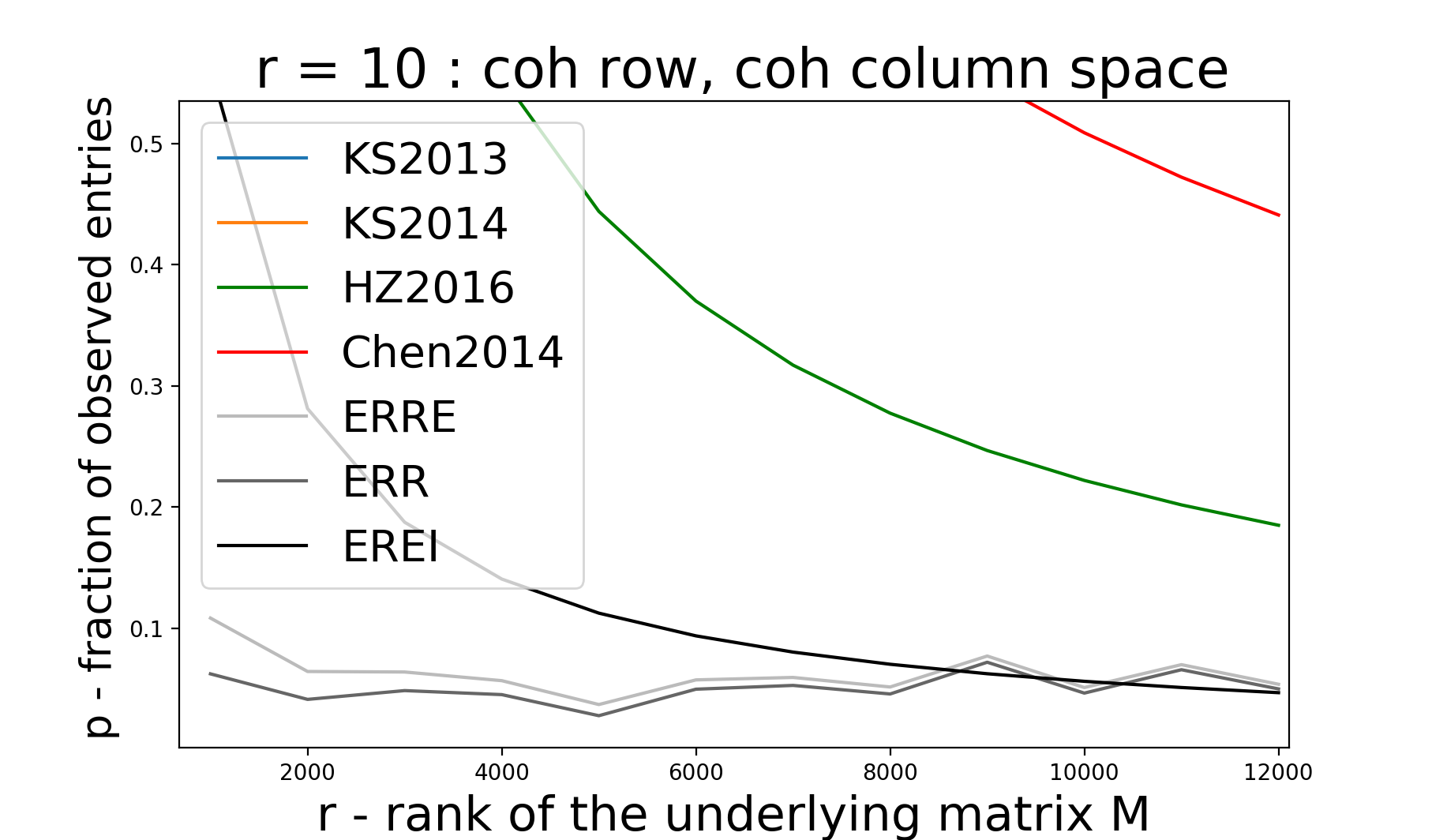} 
 \includegraphics[height=2.74cm,width=4.2cm]{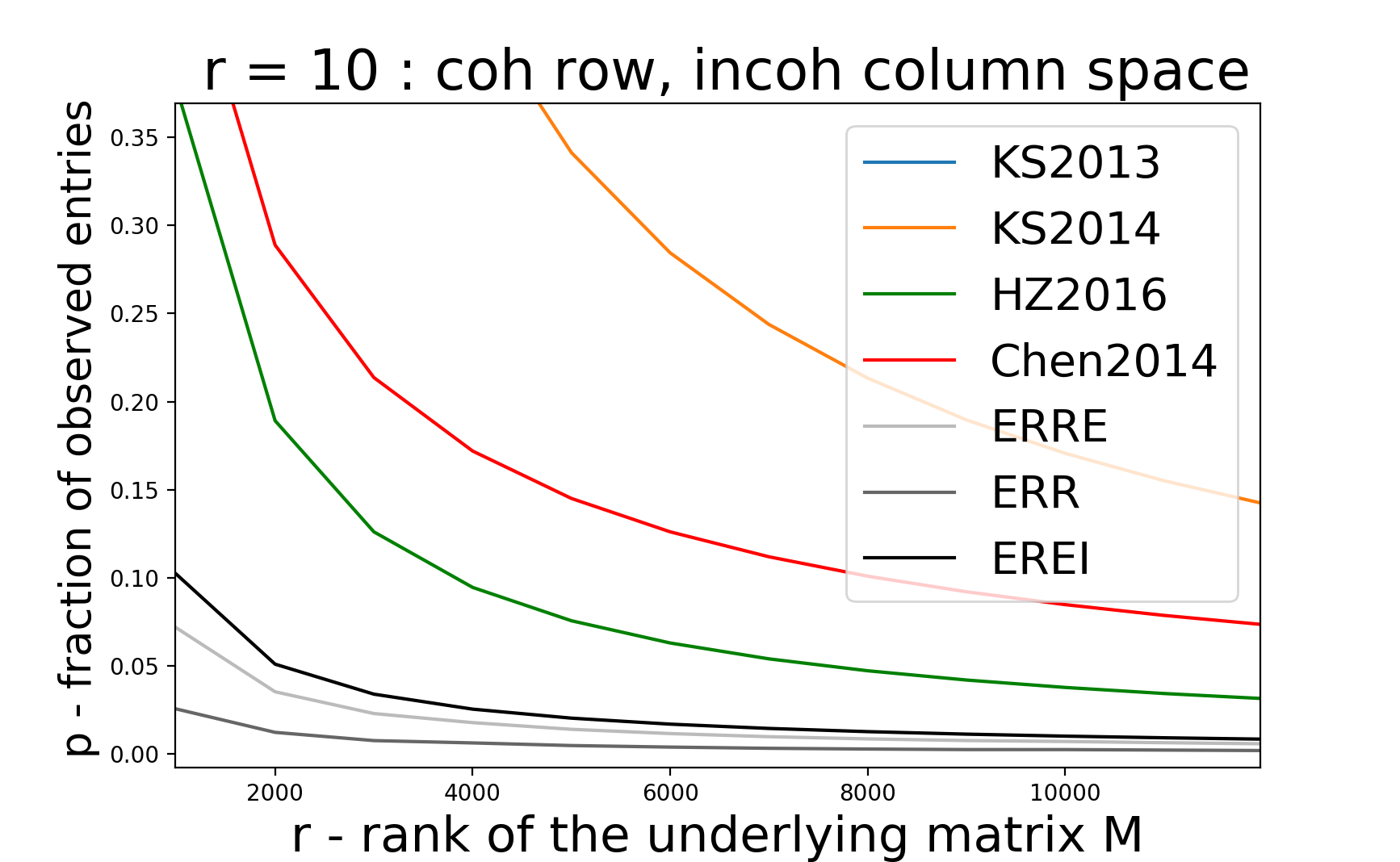} \\
 \includegraphics[height=2.74cm,width=4.2cm]{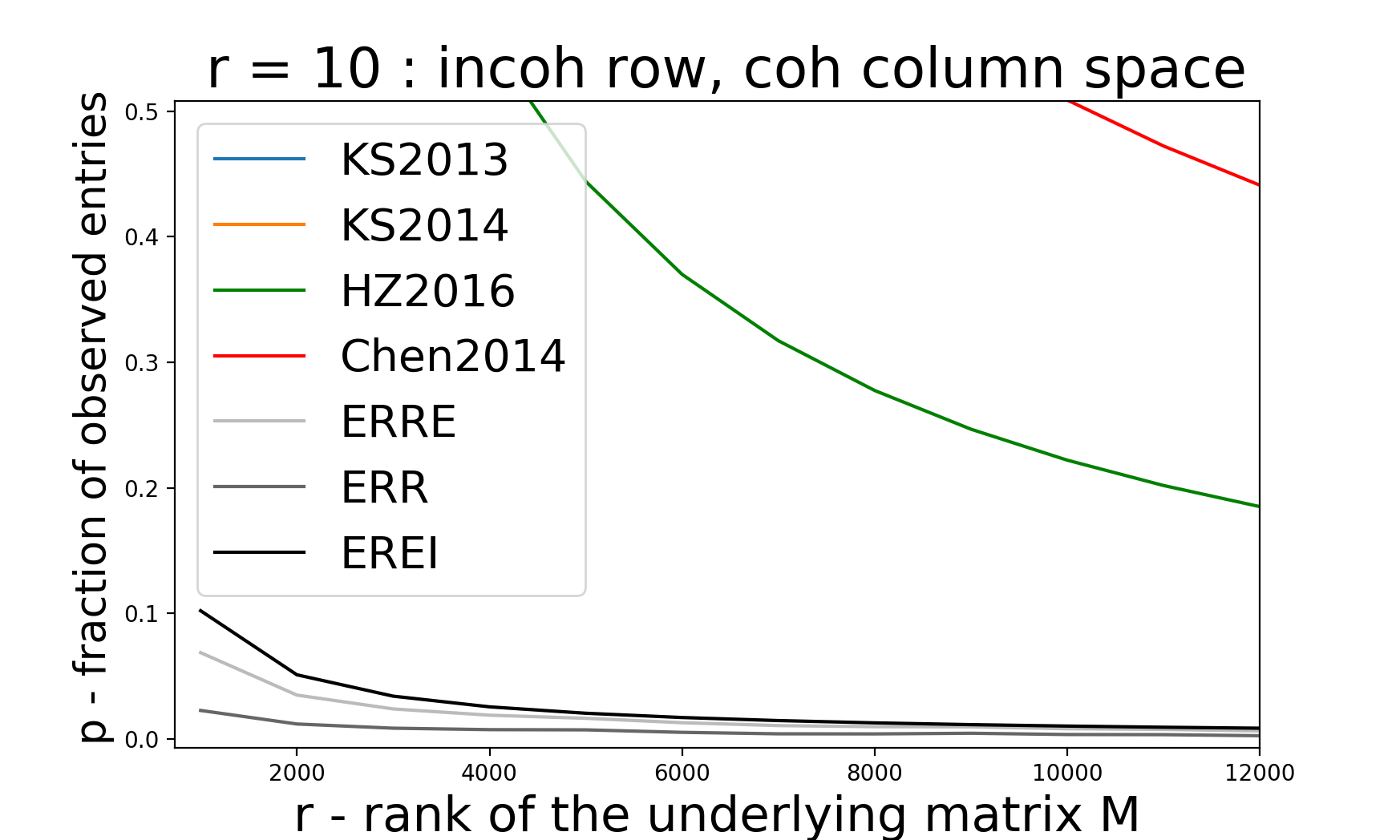} 
 \includegraphics[height=2.74cm,width=4.2cm]{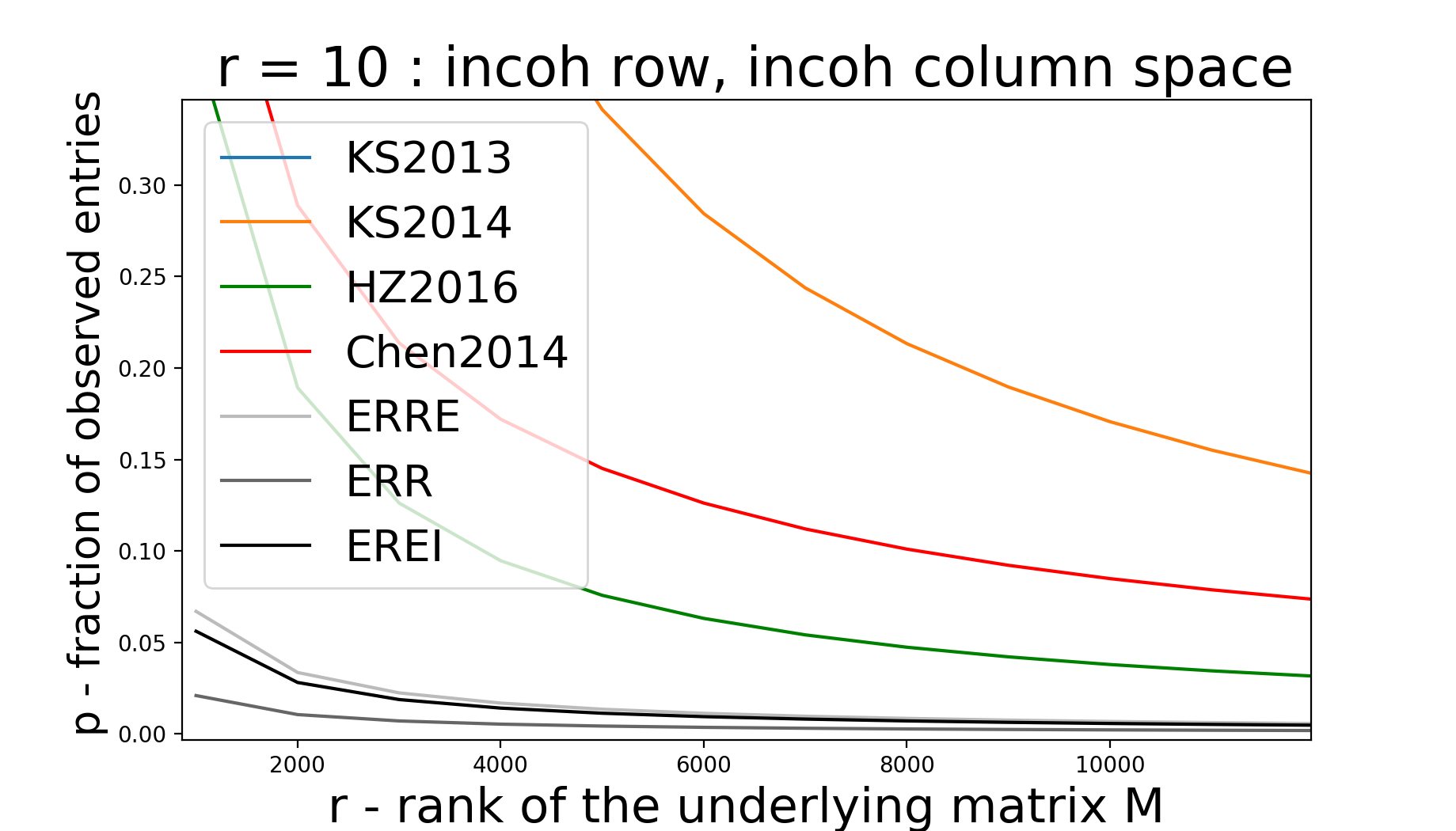}  \\[0.5ex]
\includegraphics[height=2.74cm,width=4.2cm]{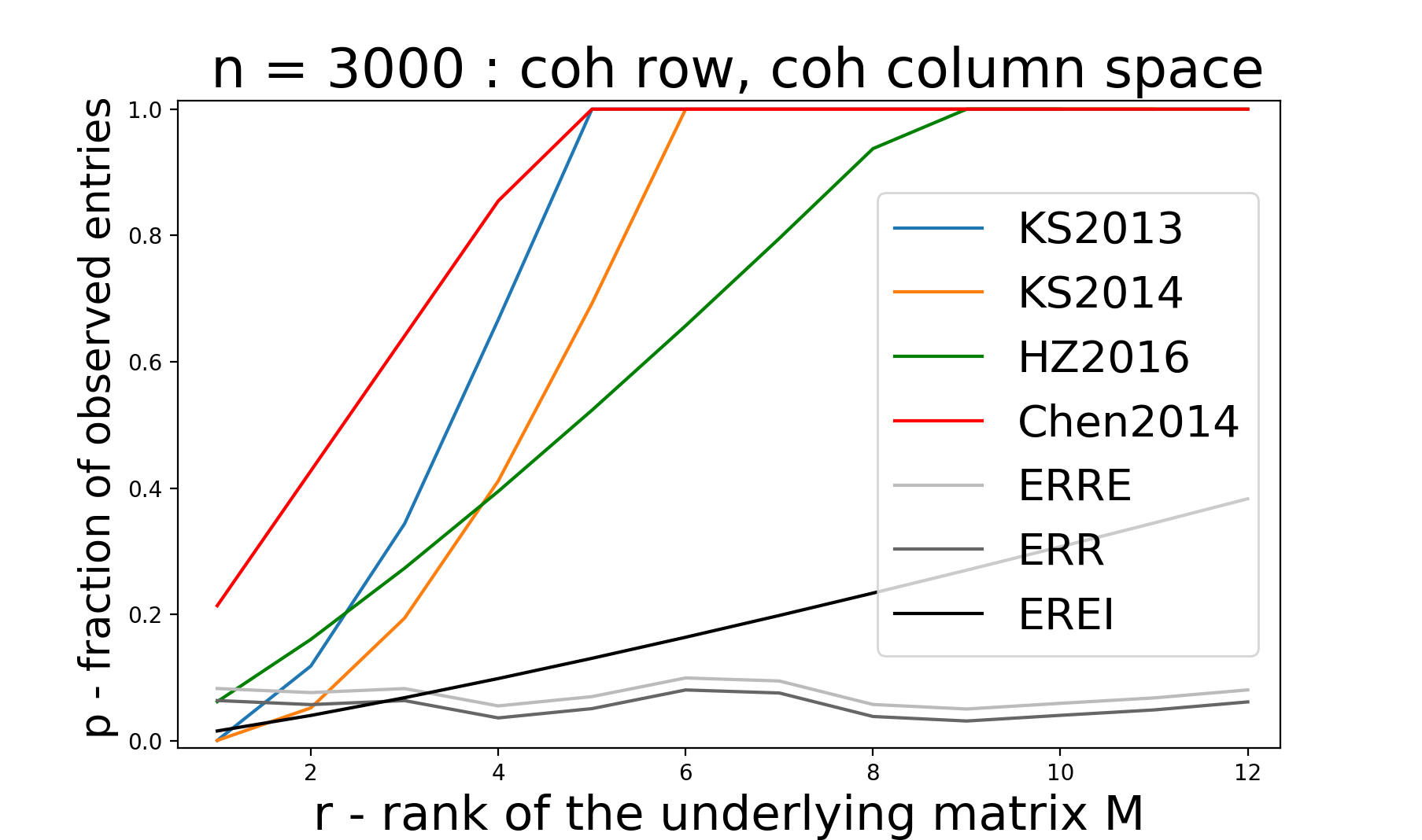}     
\includegraphics[height=2.74cm,width=4.2cm]{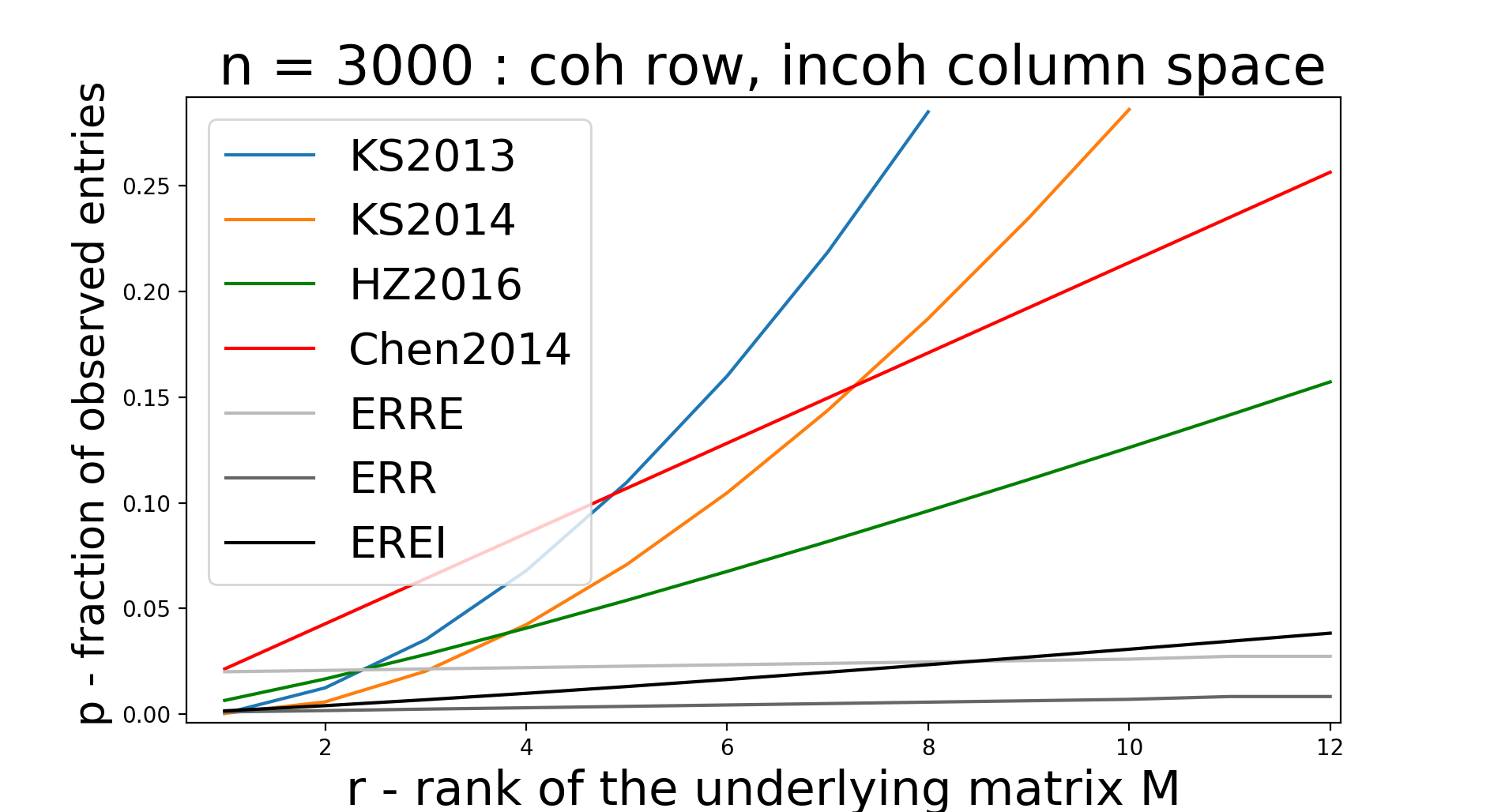}   \\
\includegraphics[height=2.74cm,width=4.2cm]{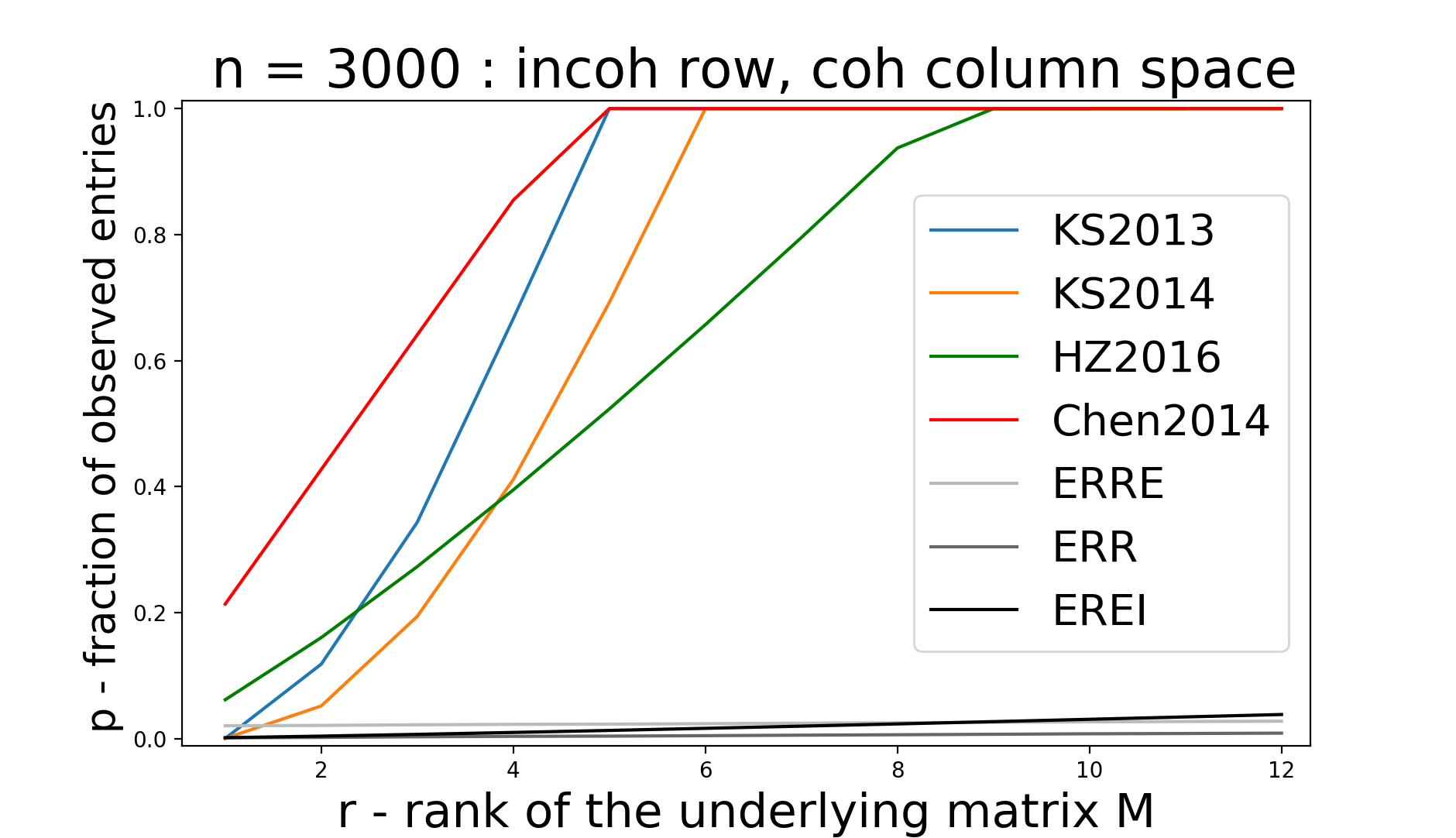}   
\includegraphics[height=2.74cm,width=4.2cm]{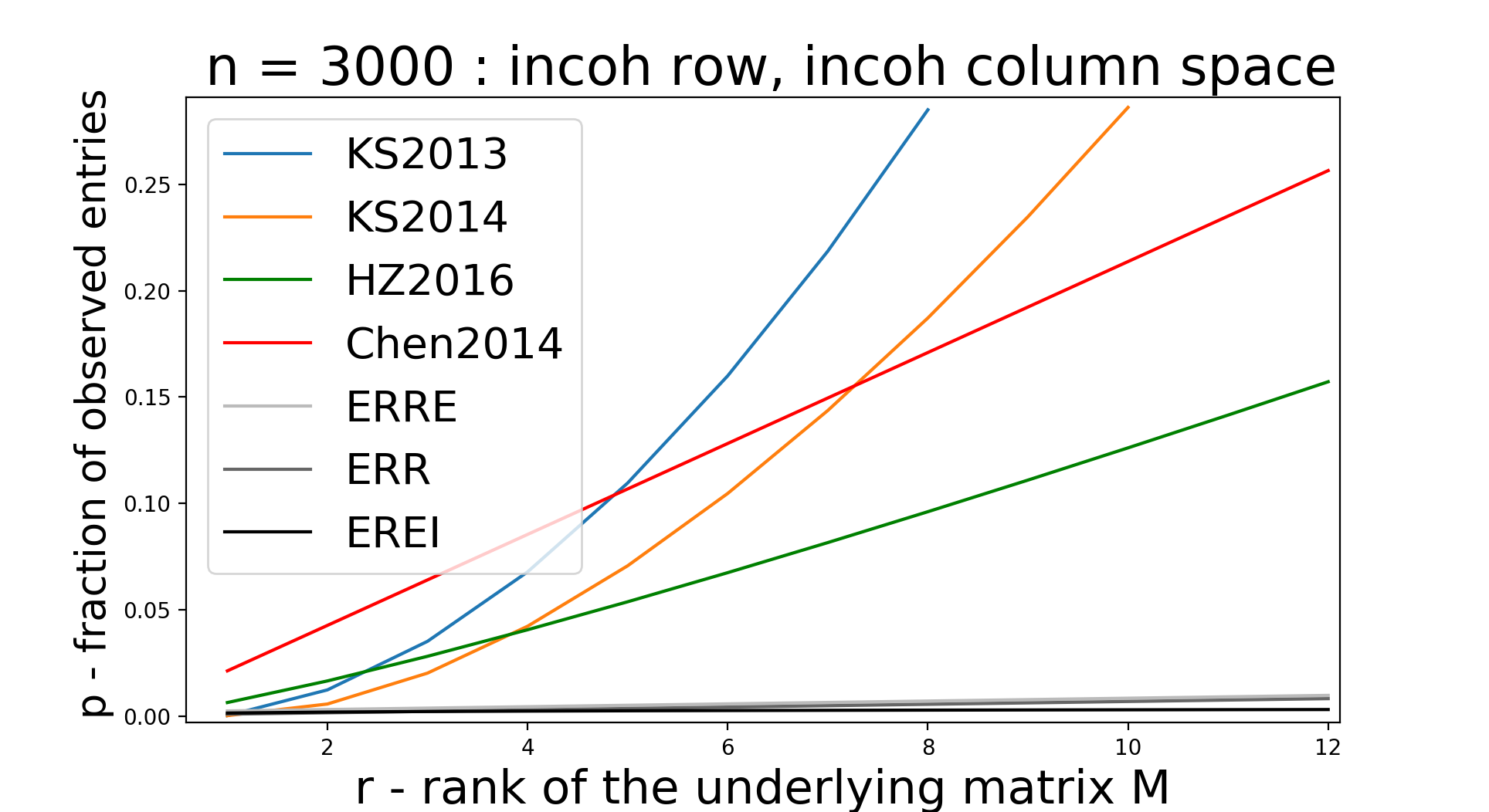} 
\end{tabular}
\end{center}
\end{table}

\section{Conclusion}

In this paper we introduce a new parameter for matrices and subspaces.
Using this parameter, we suggest new adaptive matrix completion algorithms which we are able to do simple combinatorial proofs for observation complexities.
Moreover, we show these bounds are much more efficient than bounds provided in the  state of the art.
In  following works, we are going to extend these algorithms to make them robust for different type of noises.

\bibliography{main}

\appendix
\providecommand{\upGamma}{\Gamma}
\providecommand{\uppi}{\pi}

\section{Properties of sparsity-number}

\subsection{Proof of lemma 1}

\begin{proof}
By the hypothesis of linear dependence, there are coefficients 
$\alpha_1, ... , \alpha_{n}$, not all 
zero, such that 
\begin{align*}
\alpha_1 x^1_\Omega + ... + \alpha_{n} x^n_\Omega = 0.     
\end{align*}
To show linear dependence of $x^1,\ldots, x^n$ we prove the following equation also satisfies
\begin{align*}
\alpha_1 {x^1} + ... + \alpha_{n} {x^n}=0     
\end{align*}
Assume by contradiction 
$y= \sum_{i=1}^{k} \alpha_i {x_i}$
is a nonzero vector.  But, we have
\begin{align*}
y_\Omega= \sum_{i=1}^{k} \alpha_i {x_i}_\Omega =0    
\end{align*}
which implies $\overline{\psi}(\mathbb{U}) \geq |\Omega|$ from the definition of \textit{space sparsity number}. 
However, $|\Omega| > \overline{\psi}(\mathbb{U})$ from the hypothesis of the lemma which concludes a contradiction.
Therefore, the assumption the vector $y$ being nonzero vector cannot be true, then the following satisfies : 
\begin{align*}
 y= \sum_{i=1}^{k} \alpha_i {x_i} = 0   
\end{align*}
\end{proof}
In the following lemma we prove somewhat reverse statement of the lemma 1. 
\begin{lemma}\label{lem:zero-num-lin-dependence} Let $\mathbb{U}$ be a subspace of $\mathbb{R}^{m}$ and $x^1,x^2,...,x^n$ be any set of vectors from $\mathbb{U}$. Then the linear 
dependence of ${x^1}, {x^2},...,{x^n}$ implies linear dependence of 
$x^1_\Omega, x^2_\Omega,...,x^n_\Omega$, for any $\Omega \subset [m]$.
\end{lemma}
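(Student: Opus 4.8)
The plan is to exploit the fact that coordinate restriction $x \mapsto x_\Omega$ is a linear map, so it sends any linear relation among vectors to the very same linear relation among their images; consequently the subspace $\mathbb{U}$ and the size of $\Omega$ play no role at all, and the statement is the easy converse of Lemma~\ref{lm1}. First I would invoke the hypothesis that ${x^1},\dots,{x^n}$ are linearly dependent, which by definition yields coefficients $\alpha_1,\dots,\alpha_n$, not all zero, with $\alpha_1 x^1 + \dots + \alpha_n x^n = 0$.

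Next I would restrict this identity to the coordinates indexed by $\Omega$. Since restriction commutes with addition and scalar multiplication coordinatewise, $(\alpha_1 x^1 + \dots + \alpha_n x^n)_\Omega = \alpha_1 x^1_\Omega + \dots + \alpha_n x^n_\Omega$, and the restriction of the zero vector is again zero, so we obtain $\alpha_1 x^1_\Omega + \dots + \alpha_n x^n_\Omega = 0$ with the same coefficients $\alpha_i$. Because these coefficients are not all zero, this exhibits a nontrivial linear dependence among $x^1_\Omega,\dots,x^n_\Omega$, for every choice of $\Omega \subseteq [m]$, which completes the argument.

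There is essentially no obstacle: unlike Lemma~\ref{lm1}, no lower bound on $|\Omega|$ and no incoherence-type property of $\mathbb{U}$ is required, and indeed the claim holds for arbitrary vectors of $\mathbb{R}^m$ — the hypothesis $x^i \in \mathbb{U}$ is kept only to mirror the setup of Lemma~\ref{lm1}. The one point worth emphasizing in the write-up is the consequence of combining the two lemmas: once $|\Omega| > \overline{\psi}(\mathbb{U})$, linear dependence of the restricted vectors $x^1_\Omega,\dots,x^n_\Omega$ is \emph{equivalent} to linear dependence of the full vectors $x^1,\dots,x^n$, which is precisely what justifies testing linear independence on partially observed columns in the algorithms.
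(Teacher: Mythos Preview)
Your proof is correct and matches the paper's own argument essentially verbatim: both take a nontrivial relation $\alpha_1 x^1+\cdots+\alpha_n x^n=0$ and restrict it coordinatewise to obtain $\alpha_1 x^1_\Omega+\cdots+\alpha_n x^n_\Omega=0$. Your additional remarks about the irrelevance of $\mathbb{U}$ and the equivalence obtained by combining with Lemma~\ref{lm1} are accurate and go slightly beyond what the paper writes, but the core proof is the same.
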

\begin{proof}
The proof of the statement is straightforward observation of the fact that 
$$\alpha_1 {x^1} + ... + \alpha_{n} {x^n}=0 $$
implies 
$$\alpha_1 x^1_\Omega + ... + \alpha_{n} x^n_\Omega = 0_\Omega = 0 $$    
\end{proof}
\textbf{Tightness of lemma 1:}
We show example for tightness of the lemma with the following matrix:
\[ \mathbf{M} = \begin{bmatrix}
    1       & 2 & 5 \\
    1       & 2 & 4 \\
    1       & 0 & 4 \\
    1       & 0 & 4
\end{bmatrix} \]
First observation here is columns of $\mathbf{M}$ is linearly independent.
Then, the next observation is that $e_1 = (1,0,0,0)$ is contained in the column space.
Therefore, the \textit{space sparsity number} of the column space of $\mathbf{M}$ is at least equal to $3$.
Using the fact that \textit{space sparsity number} is less then 4 we conclude that column \textit{space sparsity number} is exactly equal to $3$.
Lets check the submatirx $\textbf{M}_{\Omega:}$ where $\Omega=\{2,3,4\}$: 
\[ \mathbf{M_{\Omega:}} = \begin{bmatrix}
    1       & 2 & 4 \\
    1       & 0 & 4 \\
    1       & 0 & 4
\end{bmatrix} \]
Columns of $\mathbf{M_{\Omega:}}$ is linearly dependent (first and third column), however columns of $\mathbf{M}$ is not. 
Then, it follows that there is an example that when $\| \Omega \| =\overline{\psi}(\mathbb{U}) $ but the hypothesis of the lemma 1 not satisfied.
Therefore, the statement of the lemma 1 is tight.

\begin{lemma} \label{lem:bds}
For the column space $\mathbb{U}$ of $m\times n$ sized matrix $\mathbf{M}$ with rank-$r$, the following inequality is satisfied $$r-1 \leq \overline{\psi}(\mathbb{U}) \leq m-1.$$ 
\end{lemma}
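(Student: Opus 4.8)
The plan is to prove the two inequalities separately, both directly from the definition of the nonsparsity-number $\psi(\mathbb{U}) = \min\{\psi(x) : x \in \mathbb{U},\ x \neq 0\}$ and hence $\overline{\psi}(\mathbb{U}) = m - \psi(\mathbb{U})$. For the upper bound $\overline{\psi}(\mathbb{U}) \leq m-1$, I only need to exhibit a nonzero vector $x \in \mathbb{U}$ with $\psi(x) \geq 1$, which is automatic since any nonzero vector has at least one nonzero coordinate; thus $\psi(\mathbb{U}) \geq 1$ and so $\overline{\psi}(\mathbb{U}) = m - \psi(\mathbb{U}) \leq m-1$.

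For the lower bound $r - 1 \leq \overline{\psi}(\mathbb{U})$, equivalently $\psi(\mathbb{U}) \leq m - r + 1$, the idea is to produce a nonzero vector in $\mathbb{U}$ with at least $r-1$ zero coordinates. Since $\mathbb{U}$ has dimension $r$, pick any $r-1$ coordinate indices $\Omega \subset [m]$ with $|\Omega| = r-1$ and consider the linear map $\mathbb{U} \to \mathbb{R}^{r-1}$ sending $x \mapsto x_\Omega$ (restriction to those coordinates). The domain has dimension $r$ and the codomain has dimension $r-1$, so this map has a nontrivial kernel: there exists a nonzero $x \in \mathbb{U}$ with $x_\Omega = 0$, i.e. with at least $r-1$ zero entries, hence $\psi(x) \leq m - (r-1) = m-r+1$. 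Taking the minimum over all nonzero vectors in $\mathbb{U}$ gives $\psi(\mathbb{U}) \leq m-r+1$, which rearranges to $\overline{\psi}(\mathbb{U}) \geq r-1$.

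I expect no serious obstacle here; the only point requiring a little care is the rank-nullity argument — making sure to choose exactly $r-1$ coordinates so that the codomain dimension is strictly less than $\dim \mathbb{U}$, guaranteeing a nonzero kernel element. One should also double-check the degenerate cases ($r = 1$, where the bound reads $0 \leq \overline{\psi}(\mathbb{U}) \leq m-1$ and holds trivially since $\overline{\psi} \geq 0$ always) and note that the upper bound is attained exactly when $\mathbb{U}$ contains a standard basis vector $e_j$, while the lower bound is attained when $\mathbb{U}$ is spanned by a single "staircase" pattern of supports — these remarks are not needed for the proof but confirm tightness.

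\begin{proof}[Proof sketch]
For the upper bound: any nonzero $x \in \mathbb{U}$ satisfies $\psi(x) = \|x\|_0 \geq 1$, so $\psi(\mathbb{U}) \geq 1$ and therefore $\overline{\psi}(\mathbb{U}) = m - \psi(\mathbb{U}) \leq m-1$.

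For the lower bound: fix any $\Omega \subset [m]$ with $|\Omega| = r-1$ and consider the linear map $T : \mathbb{U} \to \mathbb{R}^{r-1}$, $T(x) = x_\Omega$. Since $\dim \mathbb{U} = r > r-1 \geq \dim \operatorname{Im}(T)$, the kernel of $T$ is nontrivial, so there is a nonzero $x \in \mathbb{U}$ with $x_\Omega = 0$. Hence $\psi(x) \leq m - (r-1)$, and thus $\psi(\mathbb{U}) \leq m-r+1$, i.e. $\overline{\psi}(\mathbb{U}) \geq r-1$.
\end{proof}
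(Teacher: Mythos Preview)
Your proof is correct. Both bounds are handled cleanly, and the rank-nullity argument for the lower bound is valid: the restriction map $T:\mathbb{U}\to\mathbb{R}^{r-1}$ has domain of dimension $r$ strictly larger than the codomain, so its kernel is nontrivial, yielding a nonzero $x\in\mathbb{U}$ with at least $r-1$ zero coordinates.

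The paper reaches the same conclusion by a slightly different, more constructive route. It first selects a row set $R\subset[m]$ with $|R|=r$ on which $\mathbf{M}_{R:}$ has full rank, then a column set $C$ making $\mathbf{M}_{R:C}$ an invertible $r\times r$ block, and finally solves $\mathbf{M}_{R:C}\alpha=e_1$ to produce $\mathbf{M}_{:C}\alpha\in\mathbb{U}$, a nonzero vector vanishing on the $r-1$ indices $R\setminus\{\text{first}\}$. Your dimension-counting argument is shorter and works purely at the level of the subspace $\mathbb{U}$ without ever touching the matrix $\mathbf{M}$ or locating a full-rank submatrix; the paper's version is a bit more explicit about \emph{where} the zeros sit. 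Both are perfectly fine here, and yours is arguably the more elegant of the two.
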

\begin{proof}
$\overline{\psi}(\mathbb{U}) < m $ is straightforward because any  nonzero vector in $\mathbb{R}^{m}$ has at most $m-1$ coordinates equal to zero.
Then, it follows from the definition of the \textit{space sparsity number}, $\overline{\psi}(\mathbb{U}) \leq m-1$. \medskip \\
In the rest of the proof we prove $r-1 \leq \overline{\psi}(\mathbf{M})$.
$\mathbf{M}$ having rank $r$ implies that we can choose $r$ rows from it that are the basis for the row space of it. 
Technically, we may find $R\subset [m]$ such that $|R| = r$ and $\mathbf{M}_{R:}$ is rank $r$. 
Similarly, we can find $C\subset [n]$ such that $|C| = r$ and $\mathbf{M}_{R:C}$ is an $r \times r$-sized matrix of rank $r$. 
It follows that there exists $\alpha \in \mathbb{R}^r$ such that $$\mathbf{M}_{R:C} \alpha = e_1 = \big(1,0,\ldots,0\big).$$ 
Consequently, $\mathbf{M}_{:C}\alpha \neq 0$ but has zero components in $r-1$ of the indices given by $R$. 
Thus $\overline{\psi}(\mathbf{M}_{:C} )\geq r-1$. 
Finally: $$ r-1 \leq \overline{\psi}(\mathbf{M}_{:C} ) \leq \overline{\psi}(\mathbb{U})$$
\end{proof}

\section{Proof of Theorem 2}

We split the statement of the theorem above into two and prove each of them separately.
First, we show that observation complexity is upper bounded by $(m+n-r)r +  2 \frac{m n}{{\psi}(\mathbb{U})}\log{(\frac{r}{\epsilon})} $.
Then, in another theorem we show that the observation complexity is bounded by 
$$(m+n-r)r + \frac{\frac{2m}{\psi(\mathbb{U})}(r+2 +\log{\frac{1}{\epsilon}})}{\psi(\mathbb{V})}  $$
and the statement of theorem follows from these two results.

\subsection{Matrices with Incoherent Row Space}

\textit{Theorem:} 
Let $r$ be the rank of underlying $m\times n$ sized matrix $\mathbf{M}$ with column space $\mathbb{U}$. 
Then,  $\mathbf{ERR}$ exactly recovers the underlying matrix $\mathbf{M}$ with probability at least $1-\epsilon$  using at most 
$$(m+n-r)r +  2 \frac{m n}{{\psi}(\mathbb{U})}\log{(\frac{r}{\epsilon})}  $$
observations.

\begin{proof}
The proof is consisting following steps: 
\begin{itemize}
\item \textit{step 1.} Give terminology will be used throughout the proof. Identifying type of observations to two classes : informative and  non-informative.
\item \textit{step 2.} Provide a bound to number of informative observations.
\item \textit{step 3.} In remaining steps, we try to give bound to non-informative observations. We start by giving upper bound to the unsuccessful observations in line 5.
\item \textit{step 4.} We model the execution of  $\mathbf{ERR}$ with a stochastic process and design another process which terminates faster than this.
\item \textit{step 5.} We relate the problem to basic combinatorial counting  problem and analyse
\item \textit{step 6.} Conclude that total number of observations is 
\begin{align*}
(m+n-r)r +  2 \frac{m n}{{\psi}(\mathbb{U})}\log{(\frac{r}{\epsilon})}.
\end{align*}

\end{itemize}
\textit{Step 1:}
For ease of readability we denote $\psi(\mathbb{U})$ by $k$ during the proof.
Lets start a process in the beginning of the algorithm for each column. 
We call process of the column $\mathbf{M}_{:j}$ dies in one of the following cases happens:
\begin{itemize}
    \item[a.] $\mathbf{M}_{:j}$ is fully observed in line 6 in some intermediate step of  $\mathbf{ERR}$
    \item[b.] $\mathbf{M}_{:j}$ is contained in the column space of the already fully observed columns in underlying matrix $\mathbf{M}$ (i.e. columns in $C$).
    \item[c.]  Algorithm already learns entire column space : $\widehat{r} = r $
\end{itemize}
If a column/process is not dead then we call it is active.
We call an observation is informative if it is observed at line 6 (i.e. it contributes to the studied column/row space learned by  $\mathbf{ERR}$ and uninformative if it observed at line 3. 
Obviously some entries are observed both at line 3 and 6, so they count in both non-informative and informative observations. \\\\
\textit{Step 2:}
We can simply observe that the number of informative observations is exactly $m r + n r - r^2$. 
Because, at the end of the algorithm set of informative observations is just set of $r$ many linearly independent columns (we have $mr$ observations here) and $r$ many linearly independent rows (we have $nr$ observations here). 
By observing that entries in $r\times r$ sub-matrix is counted twice, we conclude that overall observations is just 
\begin{align*}
  mr + nr - r^2   
\end{align*}
\textit{Step 3:} In order to give upper bound to the number of non-informative observations, we see it is enough to bound the number of phases the algorithm  $\mathbf{ERR}$ passes through.
Specifically, if the number of phases is bounded by $T$ then overall number of non-informative observations is bounded by $Tn_2$. 
In order to give upper bound to $T$, we first explore the probability of an detecting independence of an observation in line 3 for an active column:
\begin{lemma} \label{lemma:prob_bound}
The probability of detecting independence of an active column in the $j$'th phase of the algorithm  $\mathbf{ERR}$ is lower bounded by $\frac{k}{m-j}$
\end{lemma}
\begin{proof}
In an intermediate step of  $\mathbf{ERR}$ we have $|C| = |R| = \widehat{r}$ and   $M_{R:C}$ is $\widehat{r} \times \widehat{r}$ matrix of rank $\widehat{r}$. 
Then, for any $i\in [n]$, $\mathbf{M}_{R:i}$ is in the column space of $\mathbf{M}_{R:C}$ as the matrix is full rank and therefore its column space is entire $\mathbb{R}^{\widehat{r}}$. 
Then there exists unique coefficients $\alpha_1, \alpha_2, ...,\alpha_{\widehat{r}}$
for columns $C =\{c_1,...,c_{\widehat{r}}\}$ that following equality satisfied.
\begin{align*}&\alpha_1 \mathbf{M}_{R:c_1} + \ldots + \alpha_{\widehat{r}} \mathbf{M}_{R:c_{\widehat{r}}}  = \mathbf{M}_{R:i} \\
 &\implies \hspace{5mm} \alpha_1 \mathbf{M}_{R:c_1} + \ldots + \alpha_{\widehat{r}} \mathbf{M}_{R:c_{\widehat{r}}}  - \mathbf{M}_{R:i} = 0
\end{align*}
Now, let observe the vector 
\begin{align*} 
y=\alpha_1 \mathbf{M}_{:c_1} + \alpha_2 \mathbf{M}_{:c_2}  + ... + \alpha_{\widehat{r}} \mathbf{M}_{:c_{\widehat{r}}}  - \mathbf{M}_{:i} 
\end{align*} 
We know that $y\neq 0$ because we know 
column $i$ is linearly independent with previous observed columns -
$\mathbf{M}_{:c_1}, \mathbf{M}_{:c_2},..., \mathbf{M}_{:c_{\widehat{r}}}$.
Moreover for any row index $a\in R$,  $y_a =0$ because $y_R =0$
from the definition of $\alpha_j$'s. 
As $y$ is in the column space of $M$ it has at most $m-k$-many zero coordinates. 
Moreover, for any $a \notin R$ but $\mathbf{M}_{i_a:i}$ observed in line 6 
$y_{i_a}=0$ should satisfy, because otherwise in one of previous iterations we 
would already decide $\mathbf{M}_{:i}$ is linearly independent and we would add index $i$ to $C$, but here we know $i\notin C$. \medskip\\
Basically we conclude that all known coordinates of $y$ is 0 and number of 
known coordinates is represented by $observed$.
We know at least $k$  many coordinates of $y$ is nonzero and we already have $m-observed$ many coordinates of $y$ is zero, then with probability at least: $\frac{k}{m-observed}$ uniformly selected next observation will be zero.
Being nonzero of $y_a$ implies non-singularity of the matrix $\mathbf{\widehat{M}}_{\widehat{R}:\widehat{C}}$ where 
$\widehat{R}=R\cup\{a\}$ and $\widehat{C}=C\cup \{i\}$. It is because if this matrix was not 
invertible then there would 
be coefficients $\beta_1,...,\beta_{r'+1}$ (not all of them are zero) such that 
\begin{align*}
\beta_1 \mathbf{M}_{\widehat{R}:c_1} + ... + \beta_{\widehat{r}} \mathbf{M}_{\widehat{R}:c_{\widehat{r}}} + \beta_{\widehat{r}+1} \mathbf{M}_{\widehat{R} : i}  = 0.    
\end{align*} 
From lemma 2, linear independence of $\mathbf{M}_{R:c_1}, \mathbf{M}_{R:c_{\widehat{r}}} $ implies linear independence of $\mathbf{M}_{\widehat{R}:c_1}, \mathbf{M}_{\widehat{R}:c_{\widehat{r}}} $.
which concludes $\beta_{\widehat{r}+1}$ is nonzero, so we can simply assume it is $-1$.
Then 
\begin{align*}
 &\beta_1 \mathbf{M}_{\widehat{R}:c_1} + ... + \beta_{\widehat{r}} \mathbf{M}_{\widehat{R}:c_{\widehat{r}}}   =  \mathbf{M}_{\widehat{R} : i} \\
 &\implies  \hspace{5mm}
 \beta_1 \mathbf{M}_{R:c_1} + ... + \beta_{\widehat{r}} \mathbf{M}_{R:c_{\widehat{r}}}   =  \mathbf{M}_{R : i}
\end{align*}
Due to uniqueness of $\alpha_j$'s above, we can tell that 
\begin{align*}
    \alpha_1 = \beta_1 \\
    \alpha_2 = \beta_2 \\ 
    \vdots   \\
    \alpha_{\widehat{r}} = \beta_{\widehat{r}}
\end{align*}
Then the vector
$y_{\widehat{R} }=\alpha_1 \mathbf{M}_{\widehat{R}:c_1} + \ldots + \alpha_{\widehat{r}} \mathbf{M}_{\widehat{R}:c_{\widehat{r}}}   -  \mathbf{M}_{\widehat{R} : i}  = 0$ is a zero vector.
However it is a contradiction because if $y_{\widehat{R}}$ is zero vector  then $y_a=0$ due to $a\in \widehat{R}$ which we already know $y_a\neq 0$.\medskip \\
Therefore, being nonzero of $y_a$ implies non-singularity of $\mathbf{M}_{\widehat{R}:\widehat{C}}$ which is equivalent to the detection of the independence of column $\mathbf{M}_{:i}$ due to lemma 2.
As a conclusion, probability of detection of independence of an active column is at least $\frac{k}{m-observed}$  and considering the fact that $observed \geq j$ it follows that  $\frac{k}{m-observed} > \frac{k}{m-j}$ and it give the final conclusion of the desired probability is lower bounded by:
$\frac{k}{m-j}$. As desired.
\end{proof}
\bigskip
\textit{Step 4 :} We can model execution of $\mathbf{ERR}$ as following stochastic process: 
\begin{align*}S_0= X_{0,1} + X_{0,2} + . . . + X_{0,n}
\end{align*}
where each of the $X_{0,j}$ corresponds to the indicator variable of the activeness of the column $\mathbf{M}_{:j}$.
Obviously, initially at least $r$ of these random variables are equal to 1. 
We define $S_1$ similarly:
\begin{align*}S_1= X_{1,1} + X_{1,2} + . . . + X_{1,n}
\end{align*}
and for any j that $X_{0,j}=1$ satisfied, at this phase $X_{1,j}$ will be equal to 0 with probability at least $\frac{k}{m-0}$ from lemma \ref{lemma:prob_bound}. 
For remaining $j$'s that $X_{0,j}=0$ satisfied then $X_{1,j}=0$ also to be satisfied.
For the next step $S_2$ defined as:
$$S_2= S_1 +  X_{2,1} + X_{2,2} + . . . + X_{2,n_2}$$ 
where again  for any $j$ that $X_{0,j}=1$ satisfied, at this phase $X_{1,j}$ will be equal to $0$ with probability at least $\frac{k}{m-1}$ from lemma \ref{lemma:prob_bound}.
Remaining $j$'s will stay as $X_{2,j}$ to be equal to 0.
In general 
\begin{align*}S_{iter} = X_{iter,1} + X_{iter,2} + . . . + X_{iter,n_2} 
\end{align*}
where again  for any j that $X_{iter-1,j}=1$ satisfied, at this phase $X_{iter,j}$ will be equal to 0 with probability at least $\frac{k}{m-(iter-1)}$ from lemma \ref{lemma:prob_bound}.
The termination of 
algorithm is equivalent to the point $S_p = 0$ in this model.
One can see termination of this process is upper bounded by termination of the 
following process :
\begin{align*}S'_0 = X'_{0,1} + X'_{0,2} + . . . + X'_{0,r}
\end{align*}
where each of the $X'_{0,j}$ is set to be equal to $1$. we define $S'_1$ in a similar way:
\begin{align*}S'_1 = X'_{1,1} + X'_{1,2} + . . . + X'_{1,r}
\end{align*}
where each of $X'_{1,j}$ is equal to $0$ with probability $\frac{k}{m}$. Then:
\begin{align*}S'_2 = X'_{2,1} + X'_{2,2} + . . . + X'_{2,r}.
\end{align*}
Similarly $X'_{2,j}$ is set to be $0$ if $X'_{1,j}=0$ and $X'_{2,j}$ is equal to $0$ with probability $\frac{k}{m-1}$ otherwise.
In general:
\begin{align*}S'_{iter} = X'_{iter,1} + X'_{iter,2} + . . . + X'_{iter,r} 
\end{align*}
again $X'_{iter,j} = 0$ if $X'_{iter-1,j} = 0$ and, $X'_{iter,j}=0$ with probability $\frac{k}{m-(iter-1)}$ otherwise.\\\\\\
\textit{Step 5:} \hypertarget{step5}{Here}  we use a combinatorial argument to bound number of observation in each column.
\begin{lemma}
Let $X'$ be a process that is zero initially: $X'_0 = 1$ and remaining entries defined as 
\begin{align*}
    X'_{i+1} = 
    \begin{cases}
    \begin{cases}
        0 & \text{with probability } \frac{k}{m-i} \\
        1 & \text{otherwise}
    \end{cases}    & \hspace{3mm} \text{if  } \hspace{3mm}  X'_i = 1\\
    0   & \hspace{3mm}  \text{if  } \hspace{3mm}  X'_i = 0
    \end{cases}
\end{align*} 
Then expected point that $X'$ to switch to 0 is $\frac{m+1}{k+1}$.
\end{lemma}

\begin{proof}
Lets denote the expected switch time with $st$ and write the expression for it: 
\begin{align*}
\mathbb{E}[st] = \sum i P(st = i) &= 1 \frac{k}{m}  \\
                                    &+2 \frac{k}{m-1}\Big(1-\frac{k}{m}\Big)  \\ &+3\frac{k}{m-2}\Big(1-\frac{k}{m}\Big)\Big(1-\frac{k}{m-1}\Big)  \\
                                    \vdots
\end{align*}
We claim that this sum is equal to the expected position of the first $1$ in a random binary string with $k$ many $1$ and $m-k$ many $0$.
To observe truth of the claim we notice followings:
\begin{itemize}
    \item First $1$ being in the first position is obviously $\frac{k}{m}$ as there are $k$ many $1$'s out of $m$ many characters.
    \item The probability of the first $1$ being in the second place is $(1-\frac{k}{m})\frac{k}{m-1}$. The first entry being zero has probability: $1-\frac{k}{m}$ and the second entry being one is $\frac{k}{m-1}$
    \item The probability of the first $1$ being in the $i$-th place is 
    $$\Big(1-\frac{k}{m}\Big)\ldots \Big(1-\frac{k}{m-(i-1)}\Big) \Big( \frac{k}{m-(i-1)}\Big).$$ The first entry being zero has probability: $1-\frac{k}{m}$, the second entry being zero has probability $1-\frac{k}{m-1}$ and so on so forth. Finally out of remaining $m-i+1$ entries the next one being $1$ is equal to $\frac{k}{m-(i-1)}$.
\end{itemize}
Then expected position of the first 1 is equal to 
\begin{align*}
 1 \frac{k}{m} + &2 \frac{k}{m-1}\Big(1-\frac{k}{m}\Big) + \\ &3\frac{k}{m-2}\Big(1-\frac{k}{m}\Big)\Big(1-\frac{k}{m-1}\Big) + \ldots   
\end{align*}
which is equal to $\mathbf{E}[st]$.
Lets find the position of the first $1$ by double counting technique. 
A word with $k$ number of $1$ and $m - k$ number of $0$ 
can be represented as $a_0 1 a_1 1 a_2 . . . 1 a_k$ where $a_i$ represents number 
of zeros between two $1$'s. 
Now, lets find number of first 1 in the $k+1$ sized set of following words
\begin{align*}
& a_0 1 a_1 1 a_2 . . . 1 a_k,  \\
& a_1 1 a_2,...,a_k 1 a_0, \\
& a_2 1 a_3,...,a_0 1 a_1, \\
& \vdots \\
& a_k 1 a_0,...,a_k 1 a_0 
\end{align*}
Expected number of first 1 here is simply \medskip 
\begin{align*}
\frac{a_0+1}{k+1} +  \ldots +\frac{a_k+1}{k+1} 
&=  \frac{a_0+a_1+ ... +a_k + k+1}{k+1}  \\ 
&= \frac{m+1}{k+1}.
\end{align*} \medskip
So, we can divide set of all words with $k$ many $1$'s and $m-k$ many $0$'s into $k+1$-sized sets.
For each group the average position of the first 1 will be $\frac{m+1}{k+1}$.
Therefore, in overall the average position of the first $1$ is $\frac{m+1}{k+1}$.
\end{proof}
A simple followup of this lemma is  to notice:
$$\mathbb{E}[st]=\frac{m+1}{k+1}< \frac{m}{k}$$  due to $m>k$. 
Then, we can use the Markov inequality to get: $$P\Big(st>2\frac{m}{k}\Big) < \frac{1}{2}.$$
Moreover, from the combinatorial counting argument we can imply that the probability of $st > a$ will be given as 
\begin{align*}
P(st > a) = \frac{ \binom{m-a}{k} } { \binom{m}{k} }
\end{align*}
using the previous inequality we can observe that:
\begin{align*}
P\Big(st > \frac{2m}{k}\Big) = \frac{ \binom{m-2m/k }{k} } { \binom{m}{k} } < \frac{1}{2}
\end{align*}
Considering the fact 
\begin{align*}
f(x) = \frac{ \binom{x-2m/k}{k} } { \binom{x}{k} }
\end{align*}
is an increasing function and
\begin{align*}
P\Big(st > \frac{\alpha m}{k}\Big) &= 
\frac{ \binom{m-2m/k }{k} } { \binom{m}{k} } 
\frac{ \binom{m-4m/k }{k} } { \binom{m-2m/k}{k} }
\cdots
\frac{ \binom{m-2 \alpha m/k }{k} } { \binom{m-2(\alpha-1)m/k}{k} } \\
&< (\frac{1}{2})^\alpha 
\end{align*}
For a given $\epsilon$, if we set $\alpha = \log{\frac{1}{\epsilon}}$ we conclude that with probability at least $1-\epsilon$ the following inequality satisfied:
\begin{align*}
st > 2 \frac{m}{k} \log{\frac{1}{\epsilon}}.    
\end{align*}
\\
\textit{Step 6:} So, we can tell 
\begin{align*}
P\Big(X' \geq 2 \log{(\frac{1}{\epsilon})} \frac{m}{k}\Big) \leq \epsilon.    
\end{align*}
Which means for a given $j$, with probability more than $1-\epsilon$,  $X'_{i,j}$ will switch to zero before   $ 2 \log(\frac{1}{\epsilon})\frac{m}{k}$ for any $ j \in [r]$. 
Using union bound argument, after $2\log(\frac{1}{\epsilon})\frac{m}{k}$ 
iteration with probability more than $1-\epsilon r$, for any $i\in [r]$, $X'_{i,j}$ will switch to zero.\\\\
Therefore, the process $S$ will stop before $2\log{\frac{r}{\epsilon}}\frac{m}{k}$ iteration with probability $1-\epsilon$. 
Remind that, termination time of $S$ corresponds to the value of $T$ and number of total red points is bounded by $Tn$.
Then number of total red observations is bounded by: $$2 \frac{m n}{k} \log{\frac{r}{\epsilon}}.$$
Finaly, total number of observations is equal to the number of red observations plus number of blue observations which gives the bound:
\begin{align*}
(m+n-r)r+  2 \frac{m n}{k} \log{\frac{r}{\epsilon}}. 
\end{align*}
\end{proof}

\subsubsection{Discussion for Incoherent Row Spaces:}
In the lemma \ref{cohbd} we show that $$\mu(\mathbb{U}) \geq \frac{m}{r} \frac{1}{\psi(\mathbb{U})}.$$ and in the theorem above we prove that the observation complexity of $\mathbf{ERR}$ is upper bounded by $(m+n-r)r + 2 \frac{m n}{\psi(\mathbb{U})} \log{\frac{r}{\epsilon}}$ where $\mathbb{U}$ is column space of the matrix $\mathbf{M}$. \\\\
Lets denote the fraction $$\gamma = \frac{m}{\psi(\mathbb{U})} \frac{1}{\mu(\mathbb{U}) r}$$ then lemma \ref{cohbd} is equivalent to $\gamma \leq 1 $.
Lets transfer observation complexity of $\mathbf{ERR}$ with respect to $\mu(\mathbb{U})$ using $\gamma$.
Then we get the observation complexity is $$(m+n-r)r + 2\gamma \mu(\mathbb{U}) r \log{r/\epsilon}$$ and using the fact that $\gamma \leq 1$ this number is smaller than bound due to \cite{nina}:
$$(m+n-r) + 2 \mu(\mathbb{U}) r \log{r/\epsilon}$$
In many cases $\gamma$ can be very small. 
For any matrix that has high value of-$\psi(\mathbb{U})$ or low value of $\mu(\mathbb{U})$,  $\gamma$ is guaranteed to be very small. 
Specifically, if $\psi(\mathbb{U})$ is $\Theta(m)$ or $\mu(\mathbb{U})$ is $\Theta{\frac{m}{r}}$ then $\gamma$ is $\mathcal{O}(\frac{1}{r})$.
Proofs for each case provided below:\\\\
\textit{$\psi(\mathbb{U})$ is $\Theta(m)$}:  Assigning $\psi(\mathbb{U})$ being $\Theta(m)$ in the definition of $\gamma$, we conclude that $\gamma$ is $\Theta(\frac{1}{\mu(U)r})$.
Remember from the definition of the coherence, $\mu(\mathbb{U}) \geq 1$ for any subspace, which gives the final conclusion of $\gamma$ is $\mathcal{O}(\frac{1}{r})$. \medskip\\
\textit{$\mu(\mathbb{U})$ is $\Theta{\frac{m}{r}}$}:  Assigning $\mu(\mathbb{U})$ being $\Theta(\frac{m}{r})$ in the definition of $\gamma$, we conclude that $\gamma$ is $\Theta(\frac{1}{\psi(\mathbb{U})})$.
Moreover, remember that from lemma \ref{lem:bds}, we know that $\psi(\mathbb{U})$ is $\Omega(r)$ which gives final conclusion of $\mathcal{O}(\frac{1}{r})$. \medskip\\

\subsection{Matrices with Coherent Row Space}

We use the same terminology as previous theorem and $k$ and $t$  stands for $\psi(\mathbb{U})$ and $\psi(\mathbb{V})$ correspondingly. 
So, if a column is not in the column space of $C$ then we call it active.\medskip\\
From lemma \ref{lemma:prob_bound}, we know that at any step if a column is still active, then probability of its detection is at least $\frac{k}{m}$ where $k$ is the \textit{space non-sparsity number} for column space.
Let's just focus on active observations, and estimate the number of required active observations to detect $r$-th linearly independent column.
We can see that, under the condition of each observation being active observation and the probability of detection being exactly $\frac{k}{m}$ the process of the detection of $r$-th independent column can be modelled as negative binomial distribution.\medskip\\
Lets remind the formula of the probability mass function negative binomial distribution as getting $a$-th success in the $a+b$'th step while success probability being $p$ :
$$ f(a,b,p) = \binom{a+b-1}{a}p^a (1-p)^b$$
For this problem, we are interested to find the probability for finding $r$-th success at $N$-th trial
which corresponds to :
$$ f(r,N-r,\frac{k}{m}) = \binom{N-1}{r}{ \Big( \frac{k}{m}\Big) }^r \Big( 1- \frac{k}{m}\Big) ^{N-r-1}$$
As the number of observations is the focus of this theorem, we fix parameters $k,m,r$ and investigate the behaviour of the function while $N$ being variable.
Intuitively, we use the following notation: 
$$\tau_{k,m,r}(N) =  f(r,N-r,\frac{k}{m})$$
In lemma \ref{lm:taubd} and \ref{1/n} we investigate properties of this function to have better understanding of failure probability of $\mathbf{ERR}$: \\
\begin{lemma} \label{lm:taubd}
$\tau_{k,m,r}(N)$ is a decreasing function after N being larger than $(\frac{2m}{k}+1) r$. Specifically,
we can give the following bound for the decreasing rate: 
$$ 1-\frac{k}{m} < \frac{\tau_{k,m,r}(N+1)}{\tau_{k,m,r}(N)} < 1-\frac{k}{2m} $$
\end{lemma}
\begin{proof}
To show the decreasing we analyse the fraction : \\
\begin{align*}
  \frac{\tau_{k,m,r}(N+1)}{\tau_{k,m,r}(N)}  &= \frac{ \binom{N}{r}{ \Big( \frac{k}{m}\Big) }^r \Big( 1- \frac{k}{m}\Big) ^{N-r} }
{\binom{N-1}{r}{ \Big( \frac{k}{m}\Big) }^r \Big( 1- \frac{k}{m}\Big) ^{N-r-1} } \\
&=\frac{ \frac{N!}{r!(N-r)!}{ \Big( \frac{k}{m}\Big) }^r \Big( 1- \frac{k}{m}\Big) ^{N-r} }{\frac{(N-1)!}{r!(N-1-r)!}{ \Big( \frac{k}{m}\Big) }^r \Big( 1- \frac{k}{m}\Big) ^{N-r-1}}\\[0.7ex]
&=\frac{N}{N-r} (1-\frac{k}{m})
\end{align*}
So, we get following recursive formula:
\begin{align*}
\tau_{k,m,r}(N+1)  =\frac{N}{N-r} \Big(1-\frac{k}{m}\Big){\tau_{k,m,r}(N)}.    
\end{align*}
Left side of the the target inequality is easy to prove as $\frac{N}{N-r}>1$ implies
$$ \frac{\tau_{k,m,r}(N+1)}{\tau_{k,m,r}(N)}  > 1-\frac{k}{m}. $$
Then, we only need to prove the right side of the inequality.
Lets make the following observations  $$\frac{N}{N-r} = 1 + \frac{r}{N-r}$$
and from the hypothesis of the lemma we have $$N > \Big(\frac{2m}{k}+1\Big)r  \implies N-r > \frac{2m}{k}r \implies \frac{r}{N-r} < \frac{k}{2m}. $$
Now, we are ready to prove rigth side:\\
\begin{align*}
\frac{\tau_{k,m,r}(N+1)}{\tau_{k,m,r}(N)}  &= \frac{N}{N-r}\Big(1-\frac{k}{m}\Big) \\
&= \Big(1+\frac{r}{N-r}\Big)\Big(1-\frac{k}{m}\Big)  \\
&< \Big(1+\frac{k}{2m}\Big)\Big(1-\frac{k}{m}\Big)   \\
& = 1-\frac{k}{2m}-\frac{k^2}{2m^2}  \\
& < 1-\frac{k}{2m}.
\end{align*}
 Therefore:\\
 \begin{align*}
  1-\frac{k}{m} < \frac{\tau_{k,m,r}(N+1)}{\tau_{k,m,r}(N)} < 1-\frac{k}{2m}.    
 \end{align*}
 Note that we can claim decreasing of $\tau_{k,m,r}$ just follows from the right side of the inequality.
\end{proof}
To explore more properties of the function $\tau_{k,m,r}$ we prove the following lemma.

\begin{lemma} \label{1/n}
Lets assume $n$ is a positive integer. Then $\tau_{k,m,r}$ satisfies the following inequality 
\begin{align*}
 \tau_{k,m,r}\Big(\frac{2m}{k}(r+1)+n\Big) \leq \frac{1}{n}.   
\end{align*}
\end{lemma}
\begin{proof}
It is clear that  $\tau_{k,m,r}\Big(\frac{2m}{k}(r+1)\Big) < 1$ as it is value of a probability mass function.
In lemma 3 we proved that the functions $\tau_{k,m,r}$ is decreasing after $\frac{2m}{k}(r+1)$.
Therefore, for any positive integer $n$ the following inequalities satisfied:
\begin{align*}
\tau_{k,m,r}\Big(\frac{2m}{k}(r+1)+n\Big) &< \tau_{k,m,r}\Big(\frac{2m}{k}(r+1)+n-1\Big) \\
\tau_{k,m,r}\Big(\frac{2m}{k}(r+1)+n\Big) &< \tau_{k,m,r}\Big(\frac{2m}{k}(r+1)+n-2\Big) \\
                               &\vdots \\
\tau_{k,m,r}\Big(\frac{2m}{k}(r+1)+n\Big) &< \tau_{k,m,r}\Big(\frac{2m}{k}(r+1)\Big) 
\end{align*}
By summing all these inequalities we conclude:
\begin{align*}
   n \tau_{k,m,r}\Big(\frac{2m}{k}(r+1)+n\Big) < \sum_{i=1}^n \tau_{k,m,r}\Big(\frac{2m}{k}(r+1)+i\Big) 
\end{align*}
To bound the second term, we can use: $$\sum_{i=1}^n \tau_{k,m,r}\Big(\frac{2m}{k}(r+1)+i\Big) \leq  \sum_{i=r}^{\infty}\tau_{k,m,r}(i) = 1 $$
and dividing left and rigth side of the inequality above concludes:
\begin{align*}
    n \tau_{k,m,r}\Big(\frac{2m}{k}(r+1)+n\Big) < 1.
\end{align*}
\end{proof}
To apply the lemma above for $n = \frac{2m}{k}$ we get, $$\tau_{k,m,r}\Big(\frac{2m}{k}(r+1)+\frac{2m}{k}\Big) < \frac{k}{2m}$$
Using right side of the lemma \ref{lm:taubd} we notice :
$$\tau_{k,m,r}\Big(\frac{2m}{k}(r+1)+\frac{2m}{k} + i\Big)< \frac{k}{2m}\Big(1-\frac{k}{2m}\Big)^i$$
for any positive integer $i$. Picking $i = \frac{2m}{k} \log{\frac{1}{\epsilon}}$ follows as :
\begin{align*}
 \tau_{k,m,r}\Big(\frac{2m}{k}(r+1)+\frac{2m}{k} &+  \frac{2m}{k} \log{\frac{1}{\epsilon}} \Big)\\  &< \frac{k}{2m}  \Big(1-\frac{k}{2m}\Big)^{\frac{2m}{k} \log{\frac{1}{\epsilon}}}\\[0.5ex]   
&<  \frac{k}{2m}  e^{- \log{\frac{1}{\epsilon}}}  
=  \frac{k}{2m}  \epsilon  
\end{align*}
second inequality here is application of the $(1- \frac{1}{\alpha})^{\alpha} < \frac{1}{e}$ for $\alpha > 0$. Therefore we currently have :
$$ \tau_{k,m,r}\Big(\frac{2m}{k}(r+1)+\frac{2m}{k} +  \frac{2m}{k} \log{\frac{1}{\epsilon}}\Big)  <  \frac{k}{2m}  \epsilon  $$
and we target to bound :
$$ \sum_{i=0}^{\infty} \tau_{k,m,r}\Big(\frac{2m}{k}(r+1)+\frac{2m}{k} +  \frac{2m}{k} \log{\frac{1}{\epsilon}} + i \Big) .  $$
To apply right side of lemma \ref{lm:taubd}, $i$ times we conclude :\\
\begin{align*}
 \tau_{k,m,r}\Big(&\frac{2m}{k}(r+1)+\frac{2m}{k} +  \frac{2m}{k} \log{\frac{1}{\epsilon}} + i \Big) <\\ &\tau_{k,m,r}\Big(\frac{2m}{k}(r+1)+\frac{2m}{k} +  \frac{2m}{k} \log{\frac{1}{\epsilon}} \Big)\Big(1-\frac{k}{2m}\Big)^i
\end{align*}\\
Therefore the summation above can be upper bounded as: 
\begin{align*}
 &\sum_{i=0}^{\infty} \tau_{k,m,r}\Big(\frac{2m}{k}(r+1)+\frac{2m}{k} +  \frac{2m}{k} \log{\frac{1}{\epsilon}} + i \Big)    <\\ 
 &<\sum_{i=0}^{\infty}  \tau_{k,m,r}\Big(\frac{2m}{k}(r+1)+\frac{2m}{k} +  \frac{2m}{k} \log{\frac{1}{\epsilon}} \Big)\Big(1-\frac{k}{2m}\Big)^i  \\
&= \tau_{k,m,r}\Big(\frac{2m}{k}(r+1)+\frac{2m}{k} +  \frac{2m}{k} \log{\frac{1}{\epsilon}} \Big) \sum_{i=0}^{\infty} \Big(1-\frac{k}{2m}\Big)^i  \\
&= \tau_{k,m,r}\Big(\frac{2m}{k}(r+1)+\frac{2m}{k} +  \frac{2m}{k} \log{\frac{1}{\epsilon}} \Big) \frac{2m}{k} \\
&< \epsilon \frac{k}{2m}  \frac{2m}{k}  = \epsilon.
\end{align*}
Therefore, we conclude that the probability of $\mathbf{ERR}$ terminating after  $\frac{2m}{k}\big(r+2 +\log{\frac{1}{\epsilon}}\big)$ is smaller than $\epsilon$.\\\\
At this point we have number upper bound for number of active observations in order to have $1-\epsilon$ probability of termination.
However, we need to give the bound with respect to number of overall observations.
Following lemma will help us for that purpose 

\begin{lemma} \label{lem:activecolumns}
At every phase of the algorith-$\mathbf{ERR}$, if there is at least one active observation, then there is at least $t$ many active observations.
\end{lemma}
\begin{proof} 
The first step is to observe that, any column in $C$ is already inactive as they are already in temporary column space.
The second observation is any column that is linear combination of columns in C also already inactive.
We prove the lemma by assuming the hypothesis of the lemma is not correct and we will deduce contradiction from that.
Therefore, we assume that there is a step that the number of active columns is less than $t$, under the condition not all columns are inactive.\\\\
Number of active columns  being smaller than $t$ implies that the number of inactive columns is larger than $n-t$.
Which implies there is a subset of columns- $\Omega'$ that satisfies $|\Omega'|>n-t$ and $\mathbf{M}_{:\Omega'}$ has rank of at most $r-1$ (as there are still some active columns).\\\\
We know that the rank of $\mathbf{M}$ being $r$ implies there is at least one set of $r$ many linearly independent rows.
Lets denote one of these sets by $R=\{j_1,j_2,...,j_r\}$ and naturally, the set of row vectors
$\mathbf{M}_{j_1:}, \mathbf{M}_{j_2:},..., \mathbf{M}_{j_r:}$ are linearly dependent.\\\\
Returning back to the argument $M_{:\Omega'}$ having a rank of at most $r-1$, implies the rank of $M_{R:\Omega'}$ is also at most $r-1$.
Therefore, there is a linear dependence relation among the vectors $\mathbf{M}_{j_1:\Omega'}$, $\mathbf{M}_{j_2:\Omega'}$, . . . ,$\mathbf{M}_{j_r:\Omega'}$. 
As we already have $\Omega'>n-t$ then using lemma 1 we conclude that there is linear dependence relation among $\mathbf{M}_{j_1:}, \mathbf{M}_{j_2:},..., \mathbf{M}_{j_r:}$ which is a contradiction.
Therefore, if there is one active column we can conclude there is at least $t$ many active columns.
\end{proof}
Rest of the proof is simple counting argument.
We know that if we have $\frac{2m}{k}\big(r+2 +\log{\frac{1}{\epsilon}}\big)$ many observations then with probability larger than $1-\epsilon$ our algorithm succeeds.
Moreover, from the lemma above, as at each phase we have at least $\psi(\mathbb{V})$ many observations, $$\frac{\frac{2m}{k}\big(r+2 +\log{\frac{1}{\epsilon}}\big)}{t}$$ many phase is enough to have desired number of active observations.
Note that, at each step we have at most n many observation, which concludes the statement
$$\frac{\frac{2m}{k}\big(r+2 +\log{\frac{1}{\epsilon}}\big)}{t}n$$ many observation is enough to guarantee with probability $1-\epsilon$

To translate this result to coherence number rather than \textit{space sparsity number}, we use the following lemma:
\begin{lemma}  \label{cohbd}
Let $\mathbb{U}$ be an $r$-dimensional subspace of $\mathbb{R}^{m}$. Then the below relation between $\psi(\mathbb{U})$ and $\mu(U)$  holds:
\begin{align*}
\mu(U) \geq \frac{m}{r} \frac{1}{\psi(\mathbb{U})}.
\end{align*}
\end{lemma}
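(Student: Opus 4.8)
The plan is to produce a single coordinate $j^\*$ for which $\|\mathcal{P}_{\mathbb{U}} e_{j^\*}\|^2$ is already at least $1/\psi(\mathbb{U})$; since $\mu(\mathbb{U}) = \frac{m}{r}\max_{1\le j\le m}\|\mathcal{P}_{\mathbb{U}} e_j\|^2$ (here the ambient dimension is $m$, so the normalization is $m/r$, not $n/r$), that bound on the max immediately gives $\mu(\mathbb{U}) \ge \frac{m}{r}\cdot\frac{1}{\psi(\mathbb{U})}$. The natural witness is a sparsest nonzero vector of $\mathbb{U}$: I would fix $x\in\mathbb{U}$ with $x\neq 0$ and $\|x\|_0 = \psi(\mathbb{U})$, and set $S=\mathrm{supp}(x)$, so that $|S| = \psi(\mathbb{U})$.

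The first step is an elementary identity: because $x\in\mathbb{U}$ and $\mathcal{P}_{\mathbb{U}}$ is the orthogonal projection onto $\mathbb{U}$, for every $j$ we have $\langle \mathcal{P}_{\mathbb{U}} e_j, x\rangle = \langle e_j, \mathcal{P}_{\mathbb{U}} x\rangle = \langle e_j, x\rangle = x_j$. Combining this with Cauchy--Schwarz yields $|x_j| \le \|\mathcal{P}_{\mathbb{U}} e_j\|\,\|x\|$, i.e. $\|\mathcal{P}_{\mathbb{U}} e_j\|^2 \ge x_j^2/\|x\|^2$ for every $j$.

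The second step is an averaging argument over the support $S$: since $\sum_{j\in S} x_j^2 = \|x\|^2$ and $|S| = \psi(\mathbb{U})$, pigeonhole gives some $j^\*\in S$ with $x_{j^\*}^2 \ge \|x\|^2/\psi(\mathbb{U})$. Feeding this into the previous display gives $\|\mathcal{P}_{\mathbb{U}} e_{j^\*}\|^2 \ge 1/\psi(\mathbb{U})$, hence $\max_{1\le j\le m}\|\mathcal{P}_{\mathbb{U}} e_j\|^2 \ge 1/\psi(\mathbb{U})$, and multiplying by $m/r$ completes the proof.

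There is no serious obstacle here; the argument is three short lines. The only points requiring a little care are (i) using the minimal support size, so that the denominator in the averaging step is exactly $\psi(\mathbb{U})$ rather than a larger quantity (for an arbitrary nonzero $x\in\mathbb{U}$ the same computation only gives $\|\mathcal{P}_{\mathbb{U}} e_{j^\*}\|^2 \ge 1/\|x\|_0$, and one then optimizes over $x$), and (ii) keeping straight that the coherence normalization constant is $m/r$ because $\mathbb{U}\subseteq\mathbb{R}^m$.
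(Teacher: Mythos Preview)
Your proof is correct and follows essentially the same route as the paper's: pick a sparsest nonzero vector in $\mathbb{U}$, use pigeonhole on its support to find a coordinate $j^{*}$ with $x_{j^{*}}^2/\|x\|^2 \ge 1/\psi(\mathbb{U})$, and then bound $\|\mathcal{P}_{\mathbb{U}} e_{j^{*}}\|^2$ from below by that ratio. The only cosmetic difference is that the paper normalizes the vector to unit length and phrases the last step as ``projection onto the subspace dominates projection onto any unit vector in it,'' whereas you keep $x$ unnormalized and invoke self-adjointness of $\mathcal{P}_{\mathbb{U}}$ plus Cauchy--Schwarz; these are equivalent formulations of the same inequality.
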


\begin{proof}
We again denote $\psi(\mathbb{U})$ with $k$ for ease of reading.
By the definition of the 
\textit{space sparsity number}, we see that there exists a vector $v\in U$ and $k$ different indices $i_1,i_2,...,i_k$ 
such that the only nonzero components of $v$ are $v_{i_1},v_{i_2},...,v_{i_k}$. 
Up to scaling, we may assume that $v$ is 
a unit vector. 
This is equivalent to  
\begin{align*}
 {v_{i_1}}^2 + ... + {v_{i_k}}^2 = 1   
\end{align*}
Therefore, we observe that there is an index $i_a$ satisfies ${v_{i_a}}^2 \geq \frac{1}{k}$. 
If this was not the case, then for all $j$ with $1\leq j \leq k$,  ${v_{i_j}}^2 < 
\frac{1}{k}$ should satisfy, and this implies 
$$1={v_{i_1}}^2 + ... + {v_{i_k}}^2 < k \frac{1}{k} = 1$$
and this is a contradiction. 
Using these facts, we can see that 
\begin{align*}
 || P_U e_{i_a} ||^2 \geq || v \cdot e_{i_a} ||^2 = | v \cdot e_{i_a} |^2 = {v_{i_a}}^2 \geq \frac{1}{k}   
\end{align*}
where $e_{i_a}$ is $i_a$'th standard basis of $\mathbb{R}^{m}$. 
The first inequality follows from the fact that the length of projection of any vector to the subspace 
$\mathbb{U}$ is always greater or equal than the length of the projection onto a vector of that subspace. 
Thus, we have 
\begin{align*}
\mu(U) &= \frac{m}{r} \underset{1 \leq j \leq m}{\max}  || P_U e_{j}||^2  \\
 &\geq \frac{m}{r}|| P_U e_{i_a}||^2 \geq \frac{m}{r} \frac{1}{k}
= \frac{m}{r} \frac{1}{\psi(\mathbb{U})}    
\end{align*}
\end{proof}

\subsubsection{Discussion for Incoherent Row Spaces:}
In the lemma \ref{cohbd} we show that $$\mu(\mathbb{U}) \geq \frac{m}{r} \frac{1}{\psi(\mathbb{U})}.$$ and in the theorem above we prove that the observation complexity of $\mathbf{ERR}$ is upper bounded by $(m+n-r)r + 2 \frac{m n}{\psi(\mathbb{U})} \log{\frac{r}{\epsilon}}$ where $\mathbb{U}$ is column space of the matrix $\mathbf{M}$. \\\\
Lets denote the fraction $$\gamma = \frac{m}{\psi(\mathbb{U})} \frac{1}{\mu(\mathbb{U}) r}$$ then lemma \ref{cohbd} is equivalent to $\gamma \leq 1 $.
Lets transfer observation complexity of $\mathbf{ERR}$ with respect to $\mu(\mathbb{U})$ using $\gamma$.
Then we get the observation complexity is $$(m+n-r)r + 2\gamma \mu(\mathbb{U}) r \log{r/\epsilon}$$ and using the fact that $\gamma \leq 1$ this number is smaller than bound due to \cite{nina}:
$$(m+n-r) + 2 \mu(\mathbb{U}) r \log{r/\epsilon}$$
In many cases $\gamma$ can be very small. 
For any matrix that has high value of-$\psi(\mathbb{U})$ or low value of $\mu(\mathbb{U})$,  $\gamma$ is guaranteed to be very small. 
Specifically, if $\psi(\mathbb{U})$ is $\Theta(m)$ or $\mu(\mathbb{U})$ is $\Theta{\frac{m}{r}}$ then $\gamma$ is $\mathcal{O}(\frac{1}{r})$.
Proofs for each case provided below:\\\\
\textit{$\psi(\mathbb{U})$ is $\Theta(m)$}:  Assigning $\psi(\mathbb{U})$ being $\Theta(m)$ in the definition of $\gamma$, we conclude that $\gamma$ is $\Theta(\frac{1}{\mu(U)r})$.
Remember from the definition of the coherence, $\mu(\mathbb{U}) \geq 1$ for any subspace, which gives the final conclusion of $\gamma$ is $\mathcal{O}(\frac{1}{r})$. \medskip\\
\textit{$\mu(\mathbb{U})$ is $\Theta{\frac{m}{r}}$}:  Assigning $\mu(\mathbb{U})$ being $\Theta(\frac{m}{r})$ in the definition of $\gamma$, we conclude that $\gamma$ is $\Theta(\frac{1}{\psi(\mathbb{U})})$.
Moreover, remember that from lemma \ref{lem:bds}, we know that $\psi(\mathbb{U})$ is $\Omega(r)$ which gives final conclusion of $\mathcal{O}(\frac{1}{r})$. \medskip\\

\subsection{Proof of corollary \ref{cor:err} }

We start by rephrasing the corollary 3 and prove the statement for each case later on.\\[1.5ex]
\textit{Corollary:} 
Observation complexity of $\mathbf{ERR}$ studied for three different case below:
\begin{itemize}
    \item if  $ \psi(\mathbb{V})=\mathcal{O}(1)$ then observation complexity is  upper bounded by $(m+n-r)r+2 \frac{m n}{\psi(\mathbb{U})}\log{(\frac{r}{\epsilon})} =(m+n-r)r + \mathcal{O}\big(nr\mu_0\log{(\frac{r}{\epsilon})}\big)$.
    ( this bound matches with \cite{nina} but many times it is much smaller quantity as we discussed in \textbf{C.2})

    \item  if  $ \psi(\mathbb{V})=\Theta(r)$ then observation count is upper bounded by $(m+n-r)r+   \mathcal{O}\Big(\frac{\frac{2m}{\psi(\mathbb{U})}(r+2 +\log{\frac{1}{\epsilon}})}{r}n\Big) =
(m+n-r)r + \mathcal{O}\big(n \mu_0(r+\log{\frac{1}{\epsilon}}) \big) $. Selecting $\epsilon = \frac{1}{2^{\mathcal{O}(r)}}$ gives bound of  $mr + \mathcal{O}(n \mu_0r)$

    \item if  $ \psi(\mathbb{V})=\Theta(n)$ then observation count is upper bounded by  $(m+n-r)r+  \mathcal{O}\Big(\frac{\frac{2m}{\psi(\mathbb{U})}(r+2 +\log{\frac{1}{\epsilon}})}{n}n\Big)  =
(m+n-r)r +  \mathcal{O}\big( \mu_0 r(r+\log{\frac{1}{\epsilon}}) \big) $ Selecting $\epsilon = \frac{1}{2^{\mathcal{O}(r)}}$ gives bound: $\mathcal{O}((m+n-r)r)$


\end{itemize}

\begin{proof}
\textit{Case} : $ \psi(\mathbb{V})=\mathcal{O}(1)$.
From the theorem \ref{thm:lg2} the observation complexity is upper bounded by $$(m+n-r)r +  \mathrm{min}\Big(  2 \frac{m n}{{\psi}(\mathbb{U})}\log{(\frac{r}{\epsilon})} , \frac{\frac{2m}{\psi(\mathbb{U})}(r+2 +\log{\frac{1}{\epsilon}})}{\psi(\mathbb{V})}n \Big)  $$
therefore it is upper bounded by $(m+n-r)r+2 \frac{m n}{\psi(\mathbb{U})}\log{(\frac{r}{\epsilon})}$.
Moreoover, in lemma \ref{cohbd} we show $$\mu(\mathbb{U}) \geq \frac{m}{r} \frac{1}{\psi(\mathbb{U})}$$ which upper bounds the last quantity by 
$(m+n-r)r + \mathcal{O}\big(nr\mu_0\log{(\frac{r}{\epsilon})}\big)$. \\\\
\textit{Case}  $ \psi(\mathbb{V})=\Theta(r)$: This time we choose the second term in $\mathrm{min}$ operator of theorem \ref{thm:lg2}. 
We note that $\frac{m}{\psi(\mathbb{U})}$ can be upper bounded by $\mu(\mathbb{U})r$ and plugging it together with $\psi(\mathbb{V})=\Theta(r)$ gives us upper bound of $(m+n-r)r + \mathcal{O}\big(n \mu_0(r+\log{\frac{1}{\epsilon}}) \big) $. 
Moreover, if  $\epsilon = \frac{1}{2^{\mathcal{O}(r)}}$ then $\log{\frac{1}{\epsilon}}$ is $\mathcal{O}(r)$, therefore right summand is bounded by  $\mathcal{O}(n \mu_0r)$. Considering the fact $\mu_0 \geq 1$ always, then overall expression is upper bounded by $mr + \mathcal{O}(n \mu_0r)$
\\\\
\textit{Case}  $ \psi(\mathbb{V})=\Theta(n)$: This case is just similar too previous case with the difference of plugging  $\psi(\mathbb{V})=\Theta(n)$ gives us the bound of $(m+n-r)r +  \mathcal{O}\big( \mu_0 r(r+\log{\frac{1}{\epsilon}}) \big)$. 
Using the similar bound to $\epsilon$ makes the right summand to be  $\mathcal{O}(\mu_0 r^2)$.
Moreover from the definition of coherence we have $\mu_0 \leq \frac{m}{r}$ which upper bounds this term by $\mathcal{O}(m r)$ therefore overall sum is upper bounded by $\mathcal{O}((m+n-r)r)$.
\end{proof}

\section{Proof of Theorem \ref{thm:erre}. Exact Recovery While Rank Estimation}

\begin{proof}
We again use $k$ and $t$ for $\psi(\mathbb{U})$ and $\psi(\mathbb{V})$ correspondingly and use all the terminology from the previous proofs. 
Then, we start by proving that under the scenario there is still active column, then with probability $1-e^{-T \frac{kt}{m}}$, it will be detected in $T$ phases.
We prove the following key lemma in order to accomplish the proof of the theorem.

\begin{lemma}
Lets assume the underlying matrix $\mathbf{M}$ has row \textit{space non-sparsity number} $k$ and column \textit{space non-sparsity number} $t$. 
Then, if at an intermediate step of $\mathbf{ERRE}$ still column space not recovered completely, then with probability $1-e^{-T \frac{kt}{m}}$ new independent column will be detected within $T$ phases.
\end{lemma}

\begin{proof}
For every active column observation, the probability of detecting independence is at least $1-\frac{k}{m}$ from the lemma \ref{lemma:prob_bound}. 
From the lemma~\ref{lem:activecolumns} , if there is one active column, then there is at least $t$ many active column in that phase.
Therefore, the probability of detection of an active column is at least
$$\Big(1-\frac{k}{m}\Big)^t.$$
Then, we conclude that after $T$ many phase, detection probability is at least
$$\Big(1-\frac{k}{m}\Big)^{t T}.$$ 
Using the inequality $1+ x \leq e^x$ for $\forall x \in \mathbb{R}$ the quantity above can be bounded by:
$$\Big(1-\frac{k}{m}\Big)^{t T} < e^{-T\frac{kt}{m} }.$$
\end{proof}
\hspace{-6mm}
In the rest of the proof we show that with probability at least $1-e^{-T\frac{kt}{m}}$, estimated rank $\widehat{r}$ is equal to $r$.
$$P(r=\widehat{r}) = 1- \big( P(r<\widehat{r}) + P(r>\widehat{r})\big)$$
$P(r<\widehat{r}) = 0$ trivially satisfied, $\widehat{r}$ represents number of detected linearly independent columns of $\mathbf{M}$ which is always bounded by $r$.
Now, all we need to do is to bound $P(r<\widehat{r}) $.
We denote the event of existence of active column by $ACE$. Then, trivially: 
\begin{align*}
  P(\widehat{r} < r) = P\big(\widehat{r} < r \hspace{1mm} \text{and} \hspace{1mm} ACE\big)
\end{align*}
Moreover, we can write 
\begin{align*}
  P\big(\widehat{r}< r \hspace{1mm} \text{and} \hspace{1mm} ACE\big) =\sum_{i=0}^{r-1} P\big(\widehat{r}= i \hspace{1mm} \text{and} \hspace{1mm} ACE\big)= \\
 P\big( ACE  \hspace{1mm} | \hspace{1mm} \widehat{r}= i \big) P(\widehat{r}=i)   
\end{align*}
To finish the proof we just need to observe following equality / inequality :
\begin{align*}
P(\widehat{r} < r) &= \sum_{i=0}^{r-1} P\big(\widehat{r} = i  \wedge  ACE \big)  \\
&= \sum_{i=0}^{r-1} P\big(ACE  |  \widehat{r} = i \big) P(\widehat{r}=i)
\end{align*}
From the lemma above, we can imply that  $$P\big(ACE  |  \widehat{r} = i \big) \leq e^{-T\frac{kt}{m}}$$ and as probability of $P(\widehat{r}=r)\neq 0$, we conclude $P(\widehat{r}<r) < 1$. Equivalently,
$$\sum_{i=0}^{r-1} P\big(\widehat{r}=i\big) < 1$$ which all together these two inequalities concludes 
\begin{align*}
P(\widehat{r} < r) &=\sum_{i=0}^{r-1} P\big(ACE  |  \widehat{r} = i \big) P(\widehat{r}=i)\\ 
&\leq e^{-T\frac{kt}{m}}  \sum_{i=0}^{r-1} P(\widehat{r}=i) \leq e^{-T\frac{kt}{m}}
\end{align*}
To finalize the proof, we divide the algorithm $\mathbf{ERRE}$ into two parts.
First part, is the detection point of the last independent column by algorithm, and second part is waiting T many rounds to check if there is any independent column left.
Moreover, $\mathbf{ERRE}$ would fail generating correct matrix only if failure in the second part happens (there is still independent column not detected, but checking tells us that there is no left) i.e. $\widehat{r} < r$ which we just show $P(\widehat{r} < r) < e^{-T\frac{kt}{m}}$.
This concludes that with probability at least $1-e^{-T\frac{kt}{m}}$ recovered matrix is correct.\medskip \\
Therefore, with probability $1-e^{-T\frac{kt}{m}}$ the first part of the algorithm is just equivalent to the algorithm \hyperlink{err}{$\mathbf{ERR}$}, which with probability more than $1-\epsilon$
observation complexity is bounded by 
\begin{align*}
  (m+n-r)r +  \mathrm{min} \Big(  2 \frac{m n}{k}\log{(\frac{4r}{\epsilon})} , \frac{\frac{2m}{k}(r+2 +\log{\frac{1}{\epsilon}})}{t}n  \Big)     
\end{align*}
using the union bound we conclude that with probability at least $1-\epsilon+e^{-T\frac{kt}{m}}$, the algorithm recovers underlying matrix correctly and observation complexity is bounded by
\begin{align*}
  &(m+n-r)r + Tn + \\
  \mathrm{min} \Big(  2 &\frac{m n}{k}\log{(\frac{4r}{\epsilon})} , \frac{\frac{2m}{k}(r+2 +\log{\frac{1}{\epsilon}})}{t}n  \Big)     
\end{align*}
\end{proof}

\subsection{Proof of Corollary \ref{thm:erre} }

\begin{proof}
We first observe following inequality as implication of given conditions:
\begin{align*} 
    kt \geq m  \implies   \frac{kt}{m} \geq 1 \implies e^{\frac{kt}{m}}\geq e \implies  e^{-\frac{kt}{m}}\leq \frac{1}{e}
\end{align*}
The rest of the proof is just straightforward application of the theorem. 
Setting $T = \log{\frac{1}{\epsilon}}$ to the statement of theorem tells with probability at least $1- ( \epsilon + e^{-\frac{kt}{m} \log{\frac{1}{\epsilon}}})$ using 
\begin{align*}
  &(m+n-r)r+ n\log{\frac{1}{\epsilon}} + \\
  \mathrm{min} \Big(  &2 \frac{m n}{k}\log{(\frac{4r}{\epsilon})} , \frac{\frac{2m}{k}(r+2 +\log{\frac{1}{\epsilon}})}{t}n  \Big)     
\end{align*}
observations.
Considering the fact that $e^{-\frac{kt}{m}}\leq \frac{1}{e}$ we conclude  that 
\begin{align*}
 e^{-\frac{kt}{m} \log{\frac{1}{\epsilon}}}\leq  
\Big( \frac{1}{e} \Big)^{\log{\frac{1}{\epsilon}}} 
= e^{-\log{\frac{1}{\epsilon}}} = e^{\log{\epsilon}} = \epsilon
\end{align*}
Which concludes that with probability at least $1-2\epsilon$ the observation complexity is  bounded by 
\begin{align*}
  &(m+n-r)r+ n\log{\frac{1}{\epsilon}} + \\
  \mathrm{ min} \Big(  &2 \frac{m n}{k}\log{(\frac{4r}{\epsilon})} , \frac{\frac{2m}{k}(r+2 +\log{\frac{1}{\epsilon}})}{t}n  \Big)     
\end{align*}
\end{proof}

\end{document}